\setlist{leftmargin=*,noitemsep,parsep=0.2ex,topsep=0.3ex}
\newif\iflong
\newcommand{\pred}[1]{\small\mbox{\tt #1}}
\newcommand{\spform}[1]{\mathsf{#1}}
\newcommand{\mathcom}[3]{ \newcommand{#1}[#2]{\mbox{$#3$}}}
\mathcom{\imp}{0}{\ \rightarrow\ }            
\mathcom{\rimp}{0}{\ \leftarrow\ }            
\mathcom{\con}{0}{\ \wedge\ }                 
\mathcom{\dis}{0}{\ \vee\ }                   
\mathcom{\n}{0}{\neg}                     
\mathcom{\dimp}{0}{\ \leftrightarrow\ }       
\mathcom{\corresponds}{0}{\ \Lleftarrow\! \! \Rrightarrow\ }
\mathcom{\A}{0}{\forall}                  
\mathcom{\E}{0}{\exists}     
\def\Box{\mathop\square}
\def\Diamond{\mathop\lozenge}
\mathcom{\tuple}{1}{\langle #1 \rangle}
\def\eqdef{\mathrel{\ =_{\mbox{\em \tiny def}}\ }}
\def\textquote#1{``#1''}
\def\PredStandpointLogic{{\mathbb{S}_{\scaleobj{0.85}{\mathrm{FO}}}}}
\def\SSFO{{\mathbb{S}_{\scaleobj{0.85}{\mathrm{[FO]}}}}}
\def\StandpointSROIQ{{\mathbb{S}_{\scaleobj{0.85}{[\mathcal{S\!R\!O\!I\!Q}b_s}\!]}}}
\def\StandpointSROIQ{{\mathbb{S}_{\scaleobj{0.85}{[\mathcal{S\!R\!O\!I\!Q}b_s}\!]}}}
\def\SSS{\StandpointSROIQ}
\def\VSSS{{\mathbb{V}_{\scaleobj{0.85}{[\mathcal{S\!R\!O\!I\!Q}b_s}\!]}}}
\def\FOformulas{\PredStandpointLogic}
\def\st{\hbox{$\spform{s}$}\xspace}
\def\star{\hbox{$*$}\xspace}
\def\pr{\pi}
\def\Precs{\Pi}
\def\Precsf#1{\Precs_{#1}}
\def\Precsmf#1#2{\Precs_{#1,#2}}
\def\standb#1{\Box\nolimits_{\spform{#1}}}
\def\standd#1{\Diamond\nolimits_{\spform{#1}}}
\def\standbx#1{\Box\nolimits_{\scaleobj{0.8}{\spform{#1}}}}
\def\standdx#1{\Diamond\nolimits_{\scaleobj{0.8}{\spform{#1}}}}
\def\standbe{\standb{e}}
\def\standde{\standd{e}}
\def\allstandb{\standb{*}}
\def\allstandd{\standd{*}}
\def\standindef#1{\mathcal{I}_{\scaleobj{0.8}{\mathsf{#1}}}}
\def\allstandindef{\mathcal{I}_{*}}
\def\standdef#1{\mathcal{D}_{\scaleobj{0.8}{\mathsf{#1}}}}
\def\model{\mathfrak{M}}
\def\f\xspacestandtopre{\hbox{$\sigma\,$}\xspace}
\def\fpretov\xspacealue{\hbox{$\delta\,$}\xspace}
\def\ModSat#1||-#2{#1\models #2}
\def\NotModSat#1||-#2{#1\nvDash #2}
\newcommand{\skipit}[1]{} 
\newcommand{\addit}[1]{} 
\newcommand{\define}[1]{\emph{#1}}
\newcommand{\N}{\mathbb{N}}
\newcommand{\set}[1]{\left\{#1\right\}}
\newcommand{\guard}{\mathrel{\left.\middle\vert\right.}}
\newcommand{\Q}{\rotatebox[origin=c]{180}{\scalebox{0.90}{\raisebox{-1.6ex}[0ex][0ex]{$\mathsf{Q}$}}}}
\renewcommand{\eqdef}{\mathrel{\,:=\,}}
\newcommand{\ebnfeq}{\mathrel{::=}}
\newcommand{\bigland}{\bigwedge}
\newcommand{\limplies}{\rightarrow}
\renewcommand{\land}{\mathrel{\wedge}}
\renewcommand{\lor}{\mathrel{\vee}}
\newcommand{\Trans}{\mathrm{Trans}}
\newcommand{\trans}{\mathrm{trans}}
\newcommand{\svec}[1]{\vec{#1}\,}
\newcommand{\Sub}{\mathit{Sub}}
\def\Stands{\mathcal{S}}
\def\sts{\spform{s}} 
\def\Preds{\mathcal{P}}
\def\Consts{\mathcal{C}}
\def\Vars{\mathcal{V}}
\def\Terms{\mathcal{T}}
\def\E{\mathcal{E}}
\def\StandExps{\E_{\Stands}}
\def\ste{\spform{e}} 
\def\sigmaE{\sigma_{\E}}
\def\transE{\trans_{\E}}
\def\kstruct{\mathfrak{M}}
\def\struct{\mathcal{I}}
\def\varassign{v}
\def\intf{\cdot^{\struct}}
\def\Dom{\Delta}
\def\de{\delta}
\newcommand{\interpret}[2]{#1^{#2}}
\newcommand{\interprets}[1]{\interpret{#1}{\struct}}
\newcommand{\interpretsv}[1]{\interpret{#1}{\struct,\varassign}}
\newcommand{\interpretgp}[1]{\interpret{#1}{\gamma(\pr)}}
\newcommand{\interpretgpv}[1]{\interpret{#1}{\gamma(\pr),\varassign}}
\def\dland{\sqcap}
\def\dlor{\sqcup}
\def\dlsub{\sqsubseteq}
\def\FOSL{FOSL\xspace}
\def\fosl{first-or\-der stand\-point logic\xspace}
\def\foss{first-or\-der stand\-point structure\xspace}
\def\fosss{first-or\-der stand\-point structures\xspace}
\newcommand{\tad}{\hspace{1pt}}
\newenvironment{proof}{\paragraph{Proof.}}{\hfill\qed}
\newenvironment{claimproof}{\paragraph{Proof (of the claim):}}{\hfill\textit{(This proves the claim.)}}
\newenvironment{narrowalign}{\\[3pt]\mbox{}\hfill}{\hfill\mbox{}\\[4pt]}
\newcommand{\SROIQbs}{\ensuremath{\mathcal{SROIQ}b_s}\xspace}
\newcommand{\Inter}{\mathcal{I}} 
\newcommand{\atleast}[1]{\mathord{\geqslant}#1\,}
\newcommand{\atmost}[1]{\mathord{\leqslant}#1\,}
\newcommand{\conc}[1]{#1}
\newcommand{\rol}[1]{#1}
\newcommand{\rolexpR}{\rol{r}}
\newcommand{\rolR}{\mathtt{r}}
\newcommand{\rolS}{\mathtt{s}}
\newcommand{\rolT}{\mathtt{t}}
\newcommand{\rolU}{\mathtt{u}}
\newcommand{\conA}{\mathtt{A}}
\newcommand{\conB}{\mathtt{B}}
\newcommand{\conC}{\conc{C}}
\newcommand{\conD}{\conc{D}}
\newcommand{\conS}{\mathtt{M}}
\newcommand{\sroiq}{\text{$\mathcal{SROIQ}$}\xspace}
\newcommand{\Self}{\ensuremath{\mathit{Self}}}
\newif\ifsupplementary
\newif\ifdl
\ifsupplementary\renewcommand{\skipit}[1]{\color{teal}#1\color{black}}\fi
\ifdl\renewcommand{\skipit}[1]{#1}\fi
\begin{document}
\title{How to Agree to Disagree}
\subtitle{Managing Ontological Perspectives using Standpoint Logic\ifsupplementary\ifdl\else\\[1ex]\textcolor{teal}{(Version with Supplementary Material)}\fi\fi}%
%
\author{Lucía Gómez Álvarez\orcidID{0000-0002-2525-8839} \and Sebastian Rudolph\orcidID{0000-0002-1609-2080} \and Hannes Strass\orcidID{0000-0001-6180-6452}}
%
\authorrunning{L.\ Gómez Álvarez et al.}
%
\institute{Computational Logic Group, Faculty of Computer Science, TU Dresden, Germany\\
	\email{\{lucia.gomez\_alvarez,\,sebastian.rudolph,\,hannes.strass\}@tu-dresden.de}}
\maketitle              
\begin{abstract}
	The importance of taking individual, potentially conflicting perspectives into account when dealing with knowledge has been widely recognised.
	Many existing ontology management approaches fully merge knowledge perspectives, which may require weakening in order to maintain consistency; others represent the distinct views in an entirely detached way.

	As an alternative, we propose \emph{Standpoint Logic},
	a simple, yet versatile multi-modal logic ``add-on'' for existing KR languages 
	intended for the integrated representation of domain knowledge relative to diverse, possibly conflicting \emph{standpoints}, which can be hierarchically organised, combined, and put in relation with each other.

	Starting from the generic framework of \emph{First-Order Standpoint Logic} (FOSL), we subsequently focus our attention on the fragment of \emph{sentential} formulas, for which we provide a polytime translation into the standpoint-free version. This result yields decidability and favourable complexities for a variety of highly expressive decidable fragments of first-order logic. Using some elaborate encoding tricks, we then establish a similar translation for the very expressive description logic $\mathcal{SROIQ}b_s$ underlying the OWL 2 DL ontology language. By virtue of this result, existing highly optimised OWL reasoners can be used to provide practical reasoning support for ontology languages extended by standpoint modelling.

	\keywords{knowledge integration \and ontology alignment \and conflict management}
\end{abstract}
%



%
%
\section{Introduction}\label{sec:introduction}

Artefacts of contemporary knowledge representation (ontologies, knowledge bases, or knowledge graphs) serve as means to conceptualise specific domains, with varying degrees of expressivity ranging from simple classifications and database schemas to fully axiomatised theories.
Inevitably, such specifications reflect the individual points of view of their creators (be it on a personal or an institutional level) along with other contextual aspects, and they may also differ in modelling design decisions, such as the choice of conceptual granularity or specific ways of axiomatising information.
This semantic heterogeneity is bound to pose significant challenges whenever the interoperability of independently developed knowledge specifications is required.

This paper proposes a way to address the interoperability challenge while at the same time preserving the varying perspectives of the original sources. This is particularly important in scenarios that require the simultaneous consideration of multiple, potentially contradictory, viewpoints.

\begin{example}\label{example:fol}
	A broad range of conceptualisations and definitions for the notion of \emph{forest} have been specified for different purposes, giving rise to diverging or even contradictory statements regarding forest distributions.
	Consider a knowledge integration scenario involving two sources adopting a \emph{land cover} $(\mathsf{LC})$ and a \emph{land use} $(\mathsf{LU})$ perspective on forestry.
	$\mathsf{LC}$ characterises a \emph{forest} as a ``forest ecosystem'' with a minimum area (\ref{formula:defForestlandCover_sroiq}) where a \emph{forest ecosystem} is specified as an ecosystem with a certain ratio of tree canopy cover (\ref{formula:defForestEcosystem_sroiq}).
	$\mathsf{LU}$ defines a forest with regard to the purpose for which an area of land is put to use by humans, i.e.\ a forest is a maximally connected area with ``forest use'' (\ref{formula:defForestlandUse_sroiq}).\footnote{``Forest use'' areas may qualify for logging and mining concessions as well as be further classified into, e.g.\ agricultural or recreational land use.}

	Both sources $\mathsf{LC}$ and $\mathsf{LU}$ agree that forests subsume broadleaf, needleleaf and tropical forests (\ref{formula:forestSubclasses}), and they both adhere to the Basic Formal Ontology ($\mathsf{BFO}$, \cite{arp2015building}), an upper-level ontology that formalises general terms, stipulating for instance that \emph{land} and \emph{ecosystem} are disjoint categories {\rm(\ref{formula:disjointLandEco})}. 
	Using standard description logic notation and providing ``perspective annotations'' by means of correspondingly labelled box operators borrowed from multi-modal logic, the above setting might be formalised as follows:

	\begin{enumerate}[series=SroiqForestry,label={\rm (F\arabic*)},ref={\rm F\arabic*},leftmargin=2.7em,labelwidth=2.4em]\small
		\item $\standbx{LC}[\pred{Forest} \equiv \pred{ForestEcosystem}\dland \exists \pred{hasLand}. \pred{Area}_{\geq 0.5\mathrm{ha}}]$\label{formula:defForestlandCover_sroiq}
		\item $\standbx{LC}[\pred{ForestEcosystem}\equiv  \pred{Ecosystem} \dland \pred{TreeCanopy}_{\geq 20\%}]$\label{formula:defForestEcosystem_sroiq}
		\item $\standbx{LU}[\pred{Forest}\equiv\pred{ForestlandUse}\dland \pred{MCON}] \con \allstandb[ \pred{ForestlandUse}\sqsubseteq \pred{Land}]$\label{formula:defForestlandUse_sroiq}
		\item $\standbx{LC\cup LU}[(\pred{BroadleafForest}\dlor \pred{NeedleleafForest}\dlor \pred{TropicalForest}) \dlsub \pred{Forest}]$\label{formula:forestSubclasses}
		\item $(\mathsf{LC}\preceq\mathsf{BFO}) \con (\mathsf{LU}\preceq\mathsf{BFO}) \con \standbx{BFO}[\pred{Land}\dland\pred{Ecosystem}\dlsub\bot]$ \label{formula:disjointLandEco}

	\end{enumerate}\vspace{-1ex}

\end{example}

In the case of \Cref{example:fol}, \emph{ecosystem} and \emph{land} are disjoint categories according to the overarching $\mathsf{BFO}$ (\ref{formula:disjointLandEco}), yet forests are defined as ecosystems according to $\mathsf{LC}$ (\ref{formula:defForestlandCover_sroiq}) and as lands according to ${\mathsf{LU}}$ (\ref{formula:defForestlandUse_sroiq}). These kinds of disagreements result in well-reported challenges in the area of Ontology Integration  \cite{Euzenat2008OntologyAlignments,Otero-Cerdeira2015OntologyReview} and make ontology merging a non-trivial task, often involving a certain knowledge loss or weakening in order to avoid incoherence and inconsistency \cite{Pesquita2013ToAlignments,Solimando2017MinimizingEvaluation}.
In \Cref{example:fol}, to merge $\mathsf{LU}$ and $\mathsf{LC}$, there are two typical options to resolve the issue: (Opt-Weak) one may give up on the disjointness axiom (\ref{formula:disjointLandEco}), or (Opt-Dup) one could duplicate all the conflicting predicates \cite{Osman2021OntologyIssues}, in this case not only $\pred{Forest}$ (into $\pred{Forest\_LC}$ and $\pred{Forest\_LU}$), but also the forest subclasses in (\ref{formula:forestSubclasses}): $\pred{BroadleafForest}$, $\pred{NeedleleafForest}$ and $\pred{TropicalForest}$.
In contrast, we advocate a multi-perspective approach that can represent and reason with many -- possibly conflicting -- standpoints, instead of focusing on combining and merging different sources into a single conflict-free conceptual model.

\emph{Standpoint logic} \cite{gomez2021standpoint} is a formalism inspired by the theory of supervaluationism \cite{Fine1975} and rooted in modal logic that supports the coexistence of multiple standpoints and the establishment of alignments between them, by extending the base language with labelled modal operators.
Propositions  $\standbx{LC}\phi$ and $\standdx{LC}\phi$ express information relative to the \emph{standpoint} $\mathsf{LC}$ and read, respectively: ``according to $\mathsf{LC}$, it is \emph{unequivocal/conceivable} that $\phi$''.
In the semantics, standpoints are represented by sets of \emph{precisifications},\footnote{Precisifications are analogous to the \emph{worlds} of frameworks with possible-worlds semantics.} such that $\standbx{LC}\phi$ and $\standdx{LC}\phi$ hold if $\phi$ is true in all/some of the precisifications in $\mathsf{LC}$.

The logical statements {(\ref{formula:defForestlandCover_sroiq})--(\ref{formula:disjointLandEco})}, which formalise \Cref{example:fol} by means of a stand\-point-enhanced description logic, are not inconsistent, so all axioms can be jointly represented. Let us now illustrate the use of standpoint logic for reasoning with the individual perspectives. First, assume the following (globally agreed) facts about three instances, an ecosystem $e$, a parcel of land $l$, and a city $c$:

\begin{enumerate}[resume=SroiqForestry,label={\rm (F\arabic*)},ref={\rm F\arabic*}]\small
	\item $\pred{ForestEcosystem}(e) \qquad \pred{hasLand}(e,l) \qquad \pred{ForestlandUse}(l)$\label{formula:instanceForestEco_sroiq}
	\item $\pred{Area}_{\geq 0.5\mathrm{ha}}(l) \qquad \pred{MCON}(l) \qquad \pred{in}(l,\mathit{c}) \qquad \pred{City}(\mathit{c})$\label{formula:instanceInDresden_sroiq}\label{formula:instanceForestArea_sroiq}
\end{enumerate}

It is clear from (\ref{formula:defForestlandCover_sroiq}) that according to $\mathsf{LC}$, $e$ is a forest, written as  $\standbx{LC}[\pred{Forest}(e)]$, since it is a forest ecosystem (\ref{formula:instanceForestEco_sroiq}) with an area larger than $0.5\mathrm{ha}$ (\ref{formula:instanceForestArea_sroiq}).
On the other hand, it is clear from (\ref{formula:defForestlandUse_sroiq}) that according to $\mathsf{LU}$, $l$ is a forest, $\standbx{LU}[\pred{Forest}(l)]$, since it has a forest land use (\ref{formula:instanceForestEco_sroiq}) and it is a maximally connected area (\ref{formula:instanceForestArea_sroiq}).
More interestingly, we can also obtain \emph{joint inferences}: assuming the (generally accepted) background knowledge expressed by $ \pred{hasLand}\circ\pred{in}\dlsub\pred{in}$, we can infer
$$\standbx{LC\cup LU}[ (\pred{City} \dland \exists\pred{in}^{-}.\pred{Forest})(\mathit{c})],$$
which means that ``according to both $\mathsf{LC}$ and $\mathsf{LU}$ there is some forest in City $c$.''
This holds for $\mathsf{LU}$ since $l$ is a forest and is in $c$ (\ref{formula:instanceInDresden_sroiq}); and
it holds for $\mathsf{LC}$ because $e$ is a forest in the land $l$, which is in turn in $c$.

In contrast to the options of the ontology merging approach, using standpoint logic prevents the multiplication (and corresponding ``semantic detachment'') of predicates from (Opt-Dup). It also avoids unintended consequences arising when knowledge sources are weakened just enough to maintain satisfiability: In the corresponding (Opt-Weak) scenario, after merging the knowlege sources of $\mathsf{LU}$ and $\mathsf{LC}$ and removing~(\ref{formula:disjointLandEco}), we can consistently infer $\pred{Forest}(e)$ from the standpoint-free versions of (\ref{formula:defForestlandCover_sroiq}), (\ref{formula:instanceForestEco_sroiq}) and (\ref{formula:instanceForestArea_sroiq}) and $\pred{Forest}(l)$ from (\ref{formula:defForestlandUse_sroiq}), (\ref{formula:instanceForestEco_sroiq}) and (\ref{formula:instanceForestArea_sroiq}), similar to the standpoint framework. But on top of that, reapplying (\ref{formula:defForestlandCover_sroiq}) and (\ref{formula:defForestlandUse_sroiq}) also yields ``$e$ is a forest, and its land $l$ is also a forest and an ecosystem, and has some other associated land, bigger than $0.5\mathrm{ha}$'' through the following derivable assertions:
$$\pred{Forest}(e) \ \ \ \pred{hasLand}(e,l) \ \ \  \pred{Forest}(l) \ \ \  \pred{ForestEcosystem}(l) \ \ \  \exists\pred{hasLand}.\pred{Area}_{\geq0.5\mathrm{ha}}(l)$$

This illustrates how, beyond the problem of inconsistency, naively merging different models of a domain may lead to erroneous reasoning. In fact, 
other non-clashing differences between the forest definitions (\ref{formula:defForestlandCover_sroiq}) and (\ref{formula:defForestlandUse_sroiq}) respond to relevant nuances that relate to each standpoint and should also not be naively merged.
For instance, from the land cover perspective (\ref{formula:defForestlandCover_sroiq}), there is no spatial connectedness requirement, since there are ``mosaic forest ecosystems'' where the landscape displays forest patches that are sufficiently close to constitute a single ecosystem.
On the other hand, for $\mathsf{LU}$, there is no minimum tree canopy (\ref{formula:defForestlandUse_sroiq}), since a temporarily cleared area still has a ``forest use''.

Standpoint logic preserves the independence of the perspectives and escapes global inconsistency -- without weakening the sources or duplicating entities -- because its model theory (cf.~\Cref{sec:semantics}) requires consistency only within standpoints and precisifications. Notwithstanding, it allows for the specification of structures of standpoints and alignments between them. Natural reasoning tasks over such multi-standpoint specifications include gathering unequivocal or undisputed knowledge, determining knowledge that is relative to a standpoint or a set of them, and contrasting the knowledge that can be inferred from different standpoints.

\medskip

Let us get an idea of the expressivity of the proposed logic.
In spite of its simple syntax, the language is remarkably versatile; it allows for specifying knowledge relative  (a) to a standpoint, e.g.\ (\ref{formula:defForestlandCover_sroiq}), (b) to the global standpoint, denoted by $\star$, e.g.\ (\ref{formula:defForestlandUse_sroiq}), and (c) to set-theoretic combinations of standpoints, e.g.\ (\ref{formula:forestSubclasses}).
Additional language features can be defined in terms of the former: $\standindef{\mathsf{LC}}\phi$, which means that, ``according to $\mathsf{LC}$, it is inherently \emph{indeterminate} whether $\phi$'' can be defined by \mbox{$\standindef{\mathsf{LC}}\phi \eqdef \standdx{LC}\mkern-1mu\phi \con \standdx{LC}\mkern-2mu\neg\phi$}. The \emph{sharper} operator $\preceq$ is used to establish hierarchies of standpoints and constraints on the structure of precisifications, e.g.\ (\ref{formula:disjointLandEco}), and can be defined via \mbox{$\st_1 \preceq \st_2 \eqdef \standb{\st_1\backslash\st_2}[\top \sqsubseteq \bot]$}. Intuitively, \mbox{$\st_1 \preceq \st_2$} expresses that standpoint $\st_1$ inherits the propositions of $\st_2$, by virtue of ``\mbox{$\st_1 \subseteq \st_2$}'' holding for the corresponding sets of precisifications. This type of statement comes handy to ``import'' background knowledge from some ontology, such as the foundational ontology BFO in our example.
In combination, these modelling features allow for expressing further constraints useful for knowledge integration scenarios, e.g.,
\begin{enumerate}[resume=SroiqForestry,leftmargin=2.7em,label={\rm (F\arabic*)},ref={\rm F\arabic*}]
	\small
	\item \ \
	      $\mathsf{*} \preceq (\mathsf{LC\cup LU} ) \ \con \ \standdx{LC}\mkern-1mu[\top{\,\sqsubseteq\,} \top]  \ \wedge \ \standdx{LU}\mkern-1mu[\top{\,\sqsubseteq\,} \top], $\label{formula:only-union}
\end{enumerate}
where the first conjunct allows us to specify that no interpretations beyond the standpoints of interest are under consideration, by stating that the universal standpoint is a subset of the union of $\mathsf{LC}$ and $\mathsf{LU}$. The other two conjuncts enforce the non-emptiness of the standpoints of interest, $\mathsf{LC}$ and $\mathsf{LU}$, ensuring that each standpoint by itself is coherent.
To illustrate a use case, consider the statement $\allstandd[\pred{Forest}(f)\con\n\pred{MCON}(f)]$, expressing that it is conceivable that $f$ is a non-spatially-connected forest. From this, we can infer together with (\ref{formula:only-union}) and the unfulfilled requirement of connectedness of $\mathsf{LU}$ (\ref{formula:defForestlandUse_sroiq}), that $f$ must be conceivable for $\mathsf{LC}$ instead, and thus $f$ must be a forest ecosystem (\ref{formula:defForestlandCover_sroiq}):
\begin{narrowalign}
	$\standdx{\mathsf{LC}}[\pred{ForestEcosystem}(f) \con (\exists\hspace{1pt}\pred{hasLand}.\pred{Area}_{\geq0.5\mathrm{ha}})(f)]$
\end{narrowalign}

\vspace*{-1em}

Gómez Álvarez and Rudolph~\cite{gomez2021standpoint} have introduced the standpoint framework  over a propositional base logic.
While they showed favourable complexity results (standard reasoning tasks are \textsc{NP}-complete just like for plain propositional logic), 
the framework is not expressive enough for knowledge integration scenarios employing contemporary ontology languages.
In this paper, we widen the scope by (1) introducing the very general framework of \emph{first-order} standpoint logic (FOSL) and (2) allowing for more modelling flexibility on the side of standpoint descriptions by introducing support for set-theoretical combinations of standpoints (\Cref{sec:fol-standpoint-logic}).
We provide the syntax and semantics of this generic framework, before focusing on the identification of FOSL fragments with beneficial computational properties. To this end, we define the \emph{sentential} fragment, which imposes restrictions on the use of standpoint operators and guarantees a small model property (\Cref{sec:small-model-property}).
Tailored to this case, we introduce a polynomial satisfiability-preserving translation (\Cref{sec:translation}) that does not affect membership in diverse decidable fragments of FO. This allows us to immediately obtain decidability and tight complexity bounds for the standpoint versions of diverse FO fragments (e.g.\ the 2-variable counting fragment, the guarded negation fragment and the triguarded fragment) (\Cref{sec:fragments}).
In addition, it provides a way to leverage off-the-shelf reasoners for practical reasoning in standpoint versions of popular ontology languages. We demonstrate this by extending our results to a standpoint logic based on the description logic $\SROIQbs$, a semantic fragment of FO closely related to the OWL~2~DL ontology language (\Cref{sec:SSSROIQ}).
Finally, we revisit our example to discuss and illustrate properties of our proposal (\Cref{sec:example-medical}).
An extended version of this paper including proofs is available as a {\href{https://arxiv.org/abs/2206.06793}{technical report}}~\cite{gomez-alvarez22arxiv}.


\section{First-Order Standpoint Logic}\label{sec:fol-standpoint-logic}

In this section we introduce the general framework of first order standpoint logic (FOSL) as well as its sentential fragment and establish model theoretic and computational properties.
In addition to establishing various worthwhile decidability and complexity results, this approach also provides us with a clearer view on the underlying principles of our arguments, while avoiding distractions brought about by some peculiarities of expressive ontology languages, which we will address separately later on.


\subsection{FOSL Syntax and Semantics}\label{sec:semantics}

\begin{definition}
	The syntax of first-order standpoint logic ($\FOformulas$) is based on a \define{signature} $\tuple{\Preds, \Consts, \Stands}$, consisting of \define{predicate symbols} $\Preds$ (each associated with an arity \mbox{$n\in\N$}), \define{constant symbols} $\Consts$ and \define{standpoint symbols} $\Stands$, usually denoted with $\sts,\sts'$,
	as well as a set $\Vars$ of \define{variables}, typically denoted with  $x,y,\ldots$ (possibly annotated). These four sets are assumed to be countably infinite and pairwise disjoint.
	The set $\Terms$ of \define{terms} contains all constants and variables, that is, \mbox{$\Terms=\Consts\cup\Vars$}.

	The set $\StandExps$ of \define{standpoint expressions} is defined as follows\/:
	$$\ste_1,\ste_2 \ebnfeq * \mid \sts \mid \ste_1\cup\ste_2 \mid \ste_1\cap\ste_2 \mid \ste_1\setminus\ste_2$$

	The set $\FOformulas$ of \FOSL \define{formulas} is then given by
	$$ \phi,\psi \ebnfeq \pred{P}(t_1,\ldots,t_k) \mid \neg\phi \mid \phi\land\psi \mid \forall x\phi \mid \standbe\phi, $$
	where \mbox{$\pred{P}\in\Preds$} is an $k$-ary predicate symbol, \mbox{$t_1,\dots,t_k\in\Terms$} are terms,
	\mbox{$x\in\Vars$}, and \mbox{$\ste\in\StandExps$}.
\end{definition}
For a formula $\phi$, we denote the set of all of its subformulas by $\Sub(\phi)$.
The \define{size} of a formula is \mbox{$|\phi| \eqdef |\Sub(\phi)|$}.
The connectives and operators $\mathbf{t}$, $\mathbf{f}$, \mbox{$\phi\lor\psi$}, \mbox{$\phi\limplies\psi$}, \mbox{$\exists x\phi$}, and \mbox{$\standd{\ste}\phi$} are introduced as syntactic macros as usual.
As further useful syntactic sugar, we
introduce
\emph{sharpening} statements \mbox{$\ste_1\preceq\ste_2$} to denote \mbox{$\standb{\ste_1\backslash\ste_2}\mathbf{f}$},
the \emph{indeterminacy operator} via \mbox{$\standindef{\ste}\phi := \standd{\ste}\phi \con \standd{\ste}\n\phi$},
and the \emph{determinacy operator} via \mbox{$\standdef{\ste}\phi :=\neg\standindef{\ste}\phi$}.


\begin{definition}\label{def:semantics}
	Given a signature $\tuple{\Preds,\Consts,\Stands}$, a \define{\foss} $\kstruct$ is a tuple $\tuple{\Dom, \Precs, \sigma, \gamma}$ where:
	\begin{itemize}
		\item $\Dom$ is a non-empty set, the \define{domain} of $\kstruct$;
		\item $\Precs$ is the non-empty set of \define{precisifications};
		\item $\sigma$ is a function mapping each standpoint symbol from $\Stands$ to a set of precisifications (i.e., a subset of $\Precs$);
		\item $\gamma$ is a function mapping each precisification from $\Precs$ to an ordinary first-order structure $\struct$ over the domain $\Delta$, whose interpretation function $\intf$ maps\/:
		      \begin{itemize}
			      \item each predicate symbol $\pred{P}{\,\in\,}\Preds$ of arity $k$ to an $k$-ary relation \mbox{$\interprets{\pred{P}} {\,\subseteq\,} \Dom^k$},
			      \item each constant symbol $a{\,\in\,}\Consts$ to a domain element \mbox{$\interprets{a}{\,\in\,}\Dom$}.
		      \end{itemize}
		      Moreover, for any two $\pr_1,\pr_2\in\Precs$ and every $a\in\Consts$ we require $\interpret{a}{\gamma(\pr_1)}=\interpret{a}{\gamma(\pr_2)}$.
	\end{itemize}
\end{definition}
Note that all first-order structures in all precisifications implicitly share the same interpretation domain $\Dom$ given by the overarching \foss $\kstruct$, that is, we adopt the \emph{constant domain assumption}.\footnote{This is not a substantial restriction, as other variants -- expanding domains, varying domains -- can be emulated using constant domains~\cite[Theorem~6]{DBLP:conf/kr/WolterZ98}.}
Moreover, the last condition of \Cref{def:semantics} also enforces \define{rigid constants}, that is, constants denote the same objects in different standpoints (while clearly their properties could differ).
\skipit{
	For any predicate $\pred{P}$ of arity $0$, the empty Cartesian product $\Dom^0$ is simply a one-element set $\{()\}$ (whose only element is the unique zero-tuple $()$ of elements from $\Dom$), so $\interprets{\pred{P}}$ is either $\emptyset$ (interpreted as false) or $\Dom^0$ (interpreted as true), thus adequately recovering the truth value assignments of the propositional atom $\pred{P}$.}

\begin{definition}
	Let $\kstruct=\tuple{\Dom,\Precs,\sigma,\gamma}$ be a \foss for the signature $\tuple{\Preds,\Consts,\Stands}$ and $\Vars$ be a set of variables.
	%
	%
	A \define{variable assignment} is a function $\varassign:\Vars\to\Dom$ mapping variables to domain elements.
	Given a variable assignment $v$, we denote by $\varassign_{\set{x\mapsto\de}}$ the function mapping $x$ to $\de\in\Dom$ and any other variable $y$ to $\varassign(y)$.

	An interpretation function $\intf$ and a variable assignment specify how to interpret terms by domain elements\/:
	\iflong
		\begin{gather*}
			\interpret{t}{\struct,\varassign} =
			\begin{cases}
				\varassign(x)  & \text{if } t=x\in\Vars,   \\
				\interprets{a} & \text{if } t=a\in\Consts.
			\end{cases}
		\end{gather*}
	\else
		We let $\interpret{t}{\struct,\varassign} = \varassign(x)$ if $t=x\in\Vars$, and $\interpret{t}{\struct,\varassign} = \interprets{a}$ if $t=a\in\Consts$.

	\fi
	\iflong
		To interpret standpoint expressions, we use the obvious homomorphic extension of \mbox{$\sigma:\Stands\to 2^{\Precs}$} to \mbox{$\sigmaE:\StandExps\to 2^{\Precs}$} given as follows\/:
		\begin{align*}
			*    & \mapsto\Precs        & \ste_1\cup\ste_2      & \mapsto \sigmaE(\ste_1)\cup\sigmaE(\ste_2)        \\
			\sts & \mapsto \sigma(\sts) & \ste_1\cap\ste_2      & \mapsto \sigmaE(\ste_1)\cap\sigmaE(\ste_2)        \\
			     &                      & \ste_1\setminus\ste_2 & \mapsto \sigmaE(\ste_1) \setminus \sigmaE(\ste_2)
		\end{align*}
		When no confusion can arise, we denote $\sigmaE$ by $\sigma$.
	\else
		To interpret standpoint expressions, we lift $\sigma$ from $\Stands$ to all of $\StandExps$ by letting
		$\sigma(*)=\Precs$ and $\sigma(\ste_1\bowtie\ste_2) = \sigma(\ste_1)\bowtie\sigma(\ste_2)$ for ${\bowtie} \in \{\cup, \cap, \setminus\}$.
	\fi

	The satisfaction relation for formulas is defined in the usual way via structural induction.
	In what follows, let $\pr\in\Precs$ and let $\varassign:\Vars\to\Dom$ be a variable assignment;
	we now establish the definition of the satisfaction relation $\models$ for first-order standpoint logic using pointed \fosss:\skipit{\footnote{We implicitly define $\phi$ to be (globally) \emph{satisfiable} iff \mbox{$\kstruct\models\phi$} for some structure $\kstruct$.
			Local satisfiability (satisfiability in some precisification) can be emulated via global satisfiability of $\allstandd\mkern-2mu\phi$.}}%
	\vspace{-1ex}
	\begin{align*}
		 & \kstruct,\!\pr,\!\varassign \models \pred{P}(t_1,\ldots,t_k)\!\! & \text{iff}\ \  & (\interpret{t_1}{\gamma(\pr), \varassign},\ldots,\interpret{t_k}{\gamma(\pr),\varassign}) \in \interpret{\pred{P}}{\gamma(\pr)} \\
		 & \kstruct,\!\pr,\!\varassign \models \neg\phi                     & \text{iff}\ \  & \kstruct,\!\pr,\!\varassign\not\models\phi                                                                                      \\
		 & \kstruct,\!\pr,\!\varassign \models \phi\land\psi                & \text{iff}\ \  & \kstruct,\!\pr,\!\varassign\models\phi \text{ and } \kstruct,\pr,\varassign\models\psi                                          \\
		 & \kstruct,\!\pr,\!\varassign \models \forall x\phi                & \text{iff}\ \  & \kstruct,\!\pr,\!\varassign_{\set{x\mapsto\de}}\models\phi \text{ for all } \de{\,\in\,}\Dom                                    \\
		 & \kstruct,\!\pr,\!\varassign \models \standb{\ste}\phi            & \text{iff}\ \  & \kstruct,\!\pr'\!,\varassign \models \phi \text{ for all } \pr'\in\sigma(\ste)                                                  \\
		 & \kstruct,\!\pr \models \phi                                      & \text{iff}\ \  & \kstruct,\!\pr,\!\varassign\models\phi \text{ for all } \varassign:\Vars\to\Dom                                                 \\
		 & \kstruct \models \phi                                            & \text{iff}\ \  & \kstruct,\!\pr\models\phi \text{ for all } \pr\in\Precs
	\end{align*}
\end{definition}
As usual, $\kstruct$ is a \define{model} for a formula $\phi$ iff \mbox{$\kstruct\models\phi$}.
As an aside, note that the modal-logic nature of \FOSL may become more evident upon realizing that an alternative definition of its semantics via Kripke structures
can be given (with $\standb{\ste}$ interpreted in the standard way) by assigning every \mbox{$\spform{e}\in\StandExps$} the accessibility relation $\{(\pi,\pi') \mid \pi,\pi'\in \Precs, \pi'\in\sigma(e)\}$.
\iflong We note that propositional standpoint logic \cite{gomez2021standpoint} also allows for so-called \define{sharpening statements} of the form \mbox{$\spform{s}_1\preceq \spform{s}_2$} for standpoint symbols \mbox{$\spform{s}_1,\spform{s}_2\in\Stands$}, expressing that every precisification of $\spform{s}_1$ is also one of $\spform{s}_2$.
	In the framework presented here, this can be expressed via a formula of the form \mbox{$\standb{\spform{s}_1\setminus \spform{s}_2}\bot$} for which the earlier syntax can be used as syntactic sugar.\fi

Later in this paper, we will consider cases where the number of precisifications is fixed.
Thus, we conclude this section by a corresponding definition.

\begin{definition}
	For a natural number $n\in \mathbb{N}$, a \FOSL formula $\phi$ is \emph{$n$-satisfiable} iff it has a model $\tuple{\Dom, \Precs, \sigma, \gamma}$ with $|\Pi| = n$.
\end{definition}

\subsection{Small Model Property of Sentential Formulas}\label{sec:small-model-property}

One interesting aspect of standpoint logic is that its simplified Kripke semantics brings about convenient model-theoretic properties that do not hold for arbitrary (multi-)modal logics.
For propositional standpoint logic, it is known that standard reasoning tasks (such as checking satisfiability) are \textsc{NP}-complete~\cite{gomez2021standpoint}, in contrast to \textsc{PSpace}-completeness in related systems such as \textsc{K45}\textsubscript{n}, \textsc{KD45}\textsubscript{n} and \textsc{S5}\textsubscript{n}.
This result is in fact linked to a \define{small model property}, according to which every satisfiable formula has a model with a ``small'' number of precisifications.
This beneficial property only holds in the single-modal \textsc{K45}, \textsc{KD45} and \textsc{S5}~\cite{Pietruszczak2009SimplifiedKD45} but applies to the multi-modal propositional standpoint logic because of its stronger modal interaction. Fortunately, it can also be shown to carry over to some fragments of FOSL and to the use of standpoint expressions. In particular, in this section, we will show that if we restrict the language to those formulas with no free variables in subformulas of the form $\standbe\mkern-1mu\phi$, then we can indeed guarantee that every satisfiable \FOSL formula has a model whose number of precisifications is linear in the size of the formula.

\begin{definition}
	Let $\phi$ be a formula of FOSL.
	We say that $\phi$ is \emph{sentential} iff for all subformulas of $\phi$ of the form $\standbe\mkern-1mu\psi$, all variables occurring in $\psi$ are bound by a quantifier.
\end{definition}

\begin{theorem}
	\label{thm:small-model-property}
	A sentential \FOSL formula $\phi$ is satisfiable iff it has a model 
	with at most $|\phi|$ precisifications. That is, for sentential \FOSL, satisfiability and $|\phi|$-satisfiability coincide.
\end{theorem}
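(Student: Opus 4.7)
The \emph{if} direction of the equivalence is immediate, since any such small model is itself a witness to satisfiability. For the \emph{only if} direction, my plan is to start from an arbitrary model $\kstruct = \tuple{\Dom, \Precs, \sigma, \gamma}$ of $\phi$ and carve out a submodel $\kstruct'$ whose precisification set $\Precs'$ has size at most $|\phi|$. The crucial enabling observation is that, because $\phi$ is sentential, every subformula of the form $\standbe\psi$ has a closed $\psi$, so its truth value in $\kstruct$ depends neither on the current precisification nor on the variable assignment; such a $\standbe\psi$ thus behaves as a globally true-or-false sentence, a property that related multi-modal logics lack.

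The construction proceeds by \emph{witness selection}. For each subformula $\standbe\psi$ of $\phi$ with $\kstruct\not\models\standbe\psi$, the semantics provides some $\pi\in\sigma(\ste)$ at which $\kstruct,\pi\not\models\psi$; I pick one such $\pi_{\standbe\psi}$. I then take $\Precs'$ to be the collection of all these witnesses, augmented by one arbitrary precisification if the collection happens to be empty. Since $\phi$ contains at least one atomic subformula (which is not of the form $\standbe\psi$), the number of modal subformulas is strictly less than $|\phi|$, so $|\Precs'|\leq|\phi|$. I define $\kstruct' = \tuple{\Dom, \Precs', \sigma', \gamma'}$ by letting $\sigma'(\spform{s}) := \sigma(\spform{s}) \cap \Precs'$ for every standpoint symbol $\spform{s}$ and $\gamma'$ the restriction of $\gamma$ to $\Precs'$. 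A straightforward induction on standpoint expressions then yields the invariant $\sigma'(\ste) = \sigma(\ste) \cap \Precs'$ for every $\ste\in\StandExps$, with the cases of $\cup$, $\cap$, and $\setminus$ all distributing nicely over intersection with $\Precs'$.

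The main technical step is then to show, by structural induction on an arbitrary subformula $\chi$ of $\phi$, the invariant
\[
\kstruct, \pi, \varassign \models \chi \quad\text{iff}\quad \kstruct', \pi, \varassign \models \chi
\]
for every $\pi\in\Precs'$ and every variable assignment $\varassign$. Atomic, Boolean and quantifier cases are routine, given that $\kstruct$ and $\kstruct'$ share $\Dom$ and agree on the interpretations at precisifications in $\Precs'$. The interesting case is the modal one, $\chi = \standbe\psi$, where sententiality pays off: the forward direction follows from $\sigma'(\ste)\subseteq\sigma(\ste)$ together with the inductive hypothesis applied at each $\pi''\in\sigma'(\ste)\subseteq\Precs'$; the backward direction is discharged by the witness, since whenever $\kstruct\not\models\standbe\psi$, the chosen $\pi_{\standbe\psi}$ belongs to $\Precs'\cap\sigma(\ste) = \sigma'(\ste)$, falsifies $\psi$ in $\kstruct$ and, by the inductive hypothesis, also in $\kstruct'$. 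Applying the invariant to $\phi$ itself and combining it with $\kstruct\models\phi$ delivers $\kstruct'\models\phi$.

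The main obstacle -- and the reason sententiality is indispensable -- lies precisely in the modal step: if $\psi$ had free variables, the truth of $\standbe\psi$ would genuinely depend on $\varassign$, and a single witness $\pi_{\standbe\psi}$ per modal subformula would not suffice; one might need a distinct witness for every variable assignment, potentially uncountably many, destroying any finite bound on $|\Precs'|$. The sententiality restriction neatly decouples the choice of witnesses from $\varassign$ and is what turns the naive witness-based construction into a valid small-model argument.
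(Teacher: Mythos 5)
Your proof is correct and follows essentially the same route as the paper's: the identical witness-selection construction (one falsifying precisification per unsatisfied $\standbe\psi$, plus an arbitrary one if none exist), the same restriction of $\sigma$ and $\gamma$, the same auxiliary induction showing $\sigma'(\ste)=\sigma(\ste)\cap\Precs'$ for all standpoint expressions, and the same structural induction whose modal case exploits sententiality exactly as the paper does.
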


\skipit{
	\begin{proof}
		The ``if'' direction holds trivially.
		For the ``only if'' direction, we consider an arbitrary model $\kstruct=\tuple{\Dom, \Precs, \sigma, \gamma}$ of $\phi$,
		and show that it can be ``pruned'' to obtain a small model $\kstruct'=\tuple{\Dom, \Precsmf{\kstruct}{\phi}, \sigma', \gamma'}$ with $\kstruct'\models\phi$.

		First, we define a set of precisifications that will serve as witnesses for the satisfaction of subformulas preceded by a diamond modality (that is, some $\standbe$ in negative polarity).

		To this end, let $\Precsmf{\kstruct}{\phi}$ be a subset of $\Precs$ with the following property:
		for each subformula $\standb{\ste}\psi\in\Sub(\phi)$ not satisfied in $\kstruct$ (i.e., $\kstruct \not\models\standbe\psi$),
		$\Precsmf{\kstruct}{\phi}$ contains one precisification $\pr\in\Precs$ for which $\pr \in \sigma(\ste)$ and $\kstruct,\pr\not\models\psi$ hold
		(note that the existence of such a $\pr$ follows directly from the definition of $\models$);
		should all $\standb{\ste}\psi\in\Sub(\phi)$ be satisfied in $\kstruct$, we put $\Precsmf{\kstruct}{\phi}=\set{\pr}$ for an arbitrary $\pr\in\Precs$.
		%
		The definition of $\kstruct'$ then concludes by restricting $\sigma$ and $\gamma$ to $\Precsmf{\kstruct}{\phi}$, i.e., we let
		$\sigma'(s)=\sigma(s)\cap\Precsmf{\kstruct}{\phi}$ for each $s\in\Stands$ and $\gamma'(\pr)=\gamma(\pr)$ for each $\pr\in\Precsmf{\kstruct}{\phi}$.
		\iflong
			\noindent We show that the structure of standpoints is preserved in $\kstruct'$.

			\begin{claim}
				For every standpoint expression $\ste\in\StandExps$ and all $\pr\in\Precsmf{\kstruct}{\phi}$, we have $\pr\in\sigmaE(\ste)$ iff $\pr\in\sigmaE'(\ste)$.
				Since $\Precsmf{\kstruct}{\phi}\subseteq\Precs$, also $\sigmaE'(\ste)\subseteq\sigmaE(\ste)$.
				\begin{claimproof}
					We use structural induction on $\ste$:
					\begin{description}
						\item[\normalfont$\ste=\sts\in\Stands$:] $\pr\in\sigmaE(\sts)$ iff $\pr\in\sigma(\sts)$  iff $\pr\in\sigma'(\sts)$ (by the definition of $\sigma'$) iff $\pr\in\sigmaE'(\sts)$.
						\item[\normalfont$\ste=\ste_1\cap\ste_2$:] Exercise.
						\item[\normalfont$\ste=\ste_1\cup\ste_2$:] Exercise.
						\item[\normalfont$\ste=\ste_1\setminus\ste_2$:]
						      $\pr\in\sigmaE(\ste_1\setminus\ste_2)$\\
						      iff $\pr\in\sigmaE(\ste_1)$ and $\pr\notin\sigmaE(\ste_2)$\\
						      iff $\pr\in\sigmaE'(\ste_1)$ and $\pr\notin\sigmaE'(\ste_2)$ \hfill (IH)\\
						      iff $\pr\in\sigmaE'(\ste_1\setminus\ste_2)$.
					\end{description}
				\end{claimproof}
			\end{claim}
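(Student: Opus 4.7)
The plan is to prove the claim by structural induction on the standpoint expression $\ste$, following the grammar of $\StandExps$. The inductive hypothesis I would carry along is exactly the stated biconditional: for every proper subexpression $\ste'$ of $\ste$ and every $\pr \in \Precsmf{\kstruct}{\phi}$, we have $\pr \in \sigmaE(\ste')$ iff $\pr \in \sigmaE'(\ste')$. Keeping the hypothesis as an equivalence (rather than a one-way implication) is the main, admittedly minor, point to watch out for, since the set-difference case negates membership in the second operand and thus genuinely relies on both directions.

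For the base cases, I would first handle $\ste = *$ by noting that $\sigmaE(*) = \Precs$ while $\sigmaE'(*) = \Precsmf{\kstruct}{\phi}$, so any $\pr \in \Precsmf{\kstruct}{\phi}$ trivially lies in both. For $\ste = \sts \in \Stands$, the defining identity $\sigma'(\sts) = \sigma(\sts) \cap \Precsmf{\kstruct}{\phi}$ immediately gives, for $\pr \in \Precsmf{\kstruct}{\phi}$, that $\pr \in \sigma'(\sts)$ iff $\pr \in \sigma(\sts)$, which is the required biconditional at the atomic level.

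For the inductive step, I would appeal to the homomorphic extension of both $\sigma$ and $\sigma'$ through the Boolean operators. The cases $\ste = \ste_1 \cap \ste_2$ and $\ste = \ste_1 \cup \ste_2$ reduce, via the definition of $\sigmaE$, to a conjunction respectively disjunction of memberships in the two operands, to which the inductive hypothesis applies pointwise; the case $\ste = \ste_1 \setminus \ste_2$ is the one already displayed in the excerpt, where the ``iff'' form of the hypothesis is used on both the positive and the negative conjunct.

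For the concluding containment $\sigmaE'(\ste) \subseteq \sigmaE(\ste)$, I would observe that a short parallel induction shows $\sigmaE'(\ste) \subseteq \Precsmf{\kstruct}{\phi}$ for every $\ste$: both base cases land inside $\Precsmf{\kstruct}{\phi}$ by construction, and unions, intersections, and set-differences of subsets of $\Precsmf{\kstruct}{\phi}$ stay within it. Combined with the biconditional just established and with $\Precsmf{\kstruct}{\phi} \subseteq \Precs$, this yields the containment at once, completing the claim.
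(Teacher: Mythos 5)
Your proof is correct and follows essentially the same route as the paper's: structural induction on $\ste$ with a biconditional hypothesis, the atomic case settled by the definition $\sigma'(\sts)=\sigma(\sts)\cap\Precsmf{\kstruct}{\phi}$, the Boolean cases handled pointwise (with both directions of the hypothesis needed for $\setminus$), and the final containment obtained from $\sigmaE'(\ste)\subseteq\Precsmf{\kstruct}{\phi}$ together with the biconditional. If anything, you are slightly more thorough, since you explicitly treat the base case $\ste=*$, which the paper's case analysis omits.
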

		\else
			\noindent An easy induction yields that, for any $\ste\in\StandExps$ and $\pr\in\Precsmf{\kstruct}{\phi}$, we get $\pr{\,\in\,}\sigma(\ste)$ iff $\pr{\,\in\,}\sigma'(\ste)$.
			This also implies $\sigma'(\ste){\,\subseteq\,}\sigma(\ste)$.
		\fi
		\iflong
			\vspace{1em}

			It remains to show that $\kstruct'\models\phi$.
			We will do this by showing via structural induction that for all $\psi\in\Sub(\phi)$, all $\pr\in\Precsmf{\kstruct}{\phi}$ and all $\varassign:\Vars\to\Dom$, we have
			\[ \kstruct,\pr,\varassign\models\psi \quad\text{if and only if}\quad \kstruct',\pr,\varassign\models\psi \]
			\begin{description}
				\item[\normalfont$\psi=P(t_1,\ldots,t_k)$:]
				      $\kstruct,\pr,\varassign\models P(t_1,\ldots,t_k)$\\
				      iff $(\interpret{t_1}{\gamma(\pr), \varassign},\ldots,\interpret{t_k}{\gamma(\pr),\varassign}) \in \interpret{P}{\gamma(\pr)}$\\
				      iff $(\interpret{t_1}{\gamma'(\pr), \varassign},\ldots,\interpret{t_k}{\gamma'(\pr),\varassign}) \in \interpret{P}{\gamma'(\pr)}$ since $\gamma(\pr)=\gamma'(\pr)$\\
				      iff $\kstruct',\pr,\varassign\models P(t_1,\ldots,t_k)$.
				\item[\normalfont$\psi=\neg\xi$:]
				      $\kstruct,\pr,\varassign\models\neg\xi$\\
				      iff $\kstruct,\pr,\varassign\not\models\xi$\\
				      iff $\kstruct',\pr,\varassign\not\models\xi$ \hfill (IH)\\
				      iff $\kstruct',\pr,\varassign\models\neg\xi$.

				\item[\normalfont$\psi=\xi_1\land\xi_2$:] Exercise.
				\item[\normalfont$\psi=\forall x\xi$:]
				      $\kstruct,\pr,\varassign\models\forall x\xi$\\
				      iff for each $\de\in\Dom$ we have $\kstruct,\pr,\varassign_{\set{x\mapsto\de}}\models\xi$\\
				      iff for each $\de\in\Dom$ we have $\kstruct',\pr,\varassign_{\set{x\mapsto\de}}\models\xi$ \hfill (IH)\\
				      iff $\kstruct',\pr,\varassign\models\forall x\xi$.
				\item[\normalfont$\psi=\standb{\ste}\xi$:] \footnote{Note that since $\phi$ is a SBF, $\xi$ has no free variables.}
				      \begin{description}
					      \item[\normalfont“if”:] We show the contrapositive.
					            Let $\kstruct\not\models\standb{\ste}\xi$.
					            Since $\standb{\ste}\xi\in\Sub(\phi)$, by construction of $\Precsmf{\kstruct}{\phi}$ there is some $\pr\in\Precsmf{\kstruct}{\phi}$ with $\gamma(\pr)=\gamma'(\pr)$ and $\pr\in\sigmaE'(\ste)\subseteq\sigmaE(\ste)$ such that $\kstruct,\pr\not\models\xi$.
					            By the induction hypothesis, $\kstruct',\pr\not\models\xi$ and thus $\kstruct'\not\models\standb{\ste}\xi$.
					      \item[\normalfont“only if”:]
					            Let $\kstruct\models\standb{\ste}\xi$ and consider any $\pr\in\sigmaE'(\ste)$.
					            By the claim above, we find that $\pr\in\sigmaE(\ste)$.
					            By presumption, we get that $\kstruct,\pr\models\xi$.
					            By the induction hypothesis, $\kstruct',\pr\models\xi$.
					            Since $\pr$ was arbitrarily chosen, we find that $\kstruct'\models\standb{\ste}\xi$.
				      \end{description}
			\end{description}
		\else
			With this in mind, another structural induction allows us to conclude $\kstruct'\models\phi$.
		\fi
	\end{proof}}

In the following, it will be convenient to assume that formulas are in \emph{standpoint standard normal form} (SSNF), where no modal operator $\standbe$ occurs inside the scope of another.
Any FOSL formula $\phi$ can be transformed into SSNF in polynomial time.

\skipit{

	\ifsupplementary\else In the following, it will be convenient to assume that formulas are in a normal form that avoids nesting of modalities.\fi

	\iflong
		\begin{definition}[Standpoint Standard Normal Form]
			A first order standpoint logic formula $\phi$ is in \emph{standpoint standard normal form} (SSNF) iff the modal degree of $\phi$ is at most 1, where the degree $\deg(\phi)$ of a formula is defined in the standard way as:
			\begin{align*}
				\deg(P(\svec{t}))       & = 0 \text{ for } P(\svec{t}) \text{ an atomic formula}, \\
				\deg(\neg\psi)          & = \deg(\psi),                                           \\
				\deg(\psi_1\land\psi_2) & = \max\set{\deg(\psi_1),\deg(\psi_2)},                  \\
				\deg(\forall x\psi)     & = \deg(\psi),                                           \\
				\deg(\standbe\psi)      & = \deg(\psi)+1.
			\end{align*}
		\end{definition}
	\else
		\begin{definition}[Standpoint Standard Normal Form]
			A first order standpoint logic formula $\phi$ is in \emph{standpoint standard normal form} (SSNF) iff no modal operator $\standbe$ occurs inside the scope of another.
		\end{definition}
	\fi

	\begin{lemma}
		Each first-order standpoint logic formula $\phi$ can be transformed in polynomial time into a formula in SSNF that is a conservative extension of $\phi$.
	\end{lemma}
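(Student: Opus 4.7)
The plan is to introduce a fresh predicate for each modal subformula of $\phi$ and replace each such subformula by the corresponding atom, appending definitional axioms that have modal degree exactly one. Concretely, enumerate the modal subformulas of $\phi$ as $\chi_1, \ldots, \chi_n$ with $\chi_i = \standb{e_i}\psi_i$, and for each $i$ let $\svec{x}_i$ enumerate the free variables of $\chi_i$ and $P_i$ be a fresh predicate of arity $|\svec{x}_i|$. Define a renaming $[\,\cdot\,]$ that commutes with every non-modal construct and sets $[\chi_i] \eqdef P_i(\svec{x}_i)$; then $[\phi]$ is modal-free. The SSNF version of $\phi$ is
\begin{align*}
\phi' \,\eqdef\, [\phi] \,\wedge\, \bigwedge_{i=1}^{n} \forall \svec{x}_i\,\bigl(P_i(\svec{x}_i) \leftrightarrow \standb{e_i} [\psi_i]\bigr).
\end{align*}

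I would then check that $\phi'$ is in SSNF: $[\phi]$ is modal-free, and each axiom conjunct contains exactly one modal operator $\standb{e_i}$ applied to the modal-free formula $[\psi_i]$ with no further nesting. The construction also runs in polynomial time, since at most $|\Sub(\phi)|$ axiom conjuncts of size $O(|\phi|)$ are appended; sharing $P_i$ for repeated occurrences of the same subformula keeps the blow-up linear.

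For conservativity, I would argue both directions. Given a model $\kstruct = \tuple{\Dom,\Precs,\sigma,\gamma}$ of $\phi$, extend it to $\kstruct^+$ by setting $P_i^{\gamma(\pi)} \eqdef \{\svec{d} \in \Dom^{|\svec{x}_i|} \mid \kstruct, \pi, \varassign_{\set{\svec{x}_i \mapsto \svec{d}}} \models \chi_i\}$ for each $i$ and every $\pi\in\Precs$. A structural induction on subformulas $\xi$ of $\phi$ then yields $\kstruct, \pi, \varassign \models \xi$ iff $\kstruct^+, \pi, \varassign \models [\xi]$, from which $\kstruct^+ \models \phi'$ follows. Conversely, assuming $\kstruct^+ \models \phi'$, the definitional axioms force each $P_i$ to coincide extensionally with $\standb{e_i}[\psi_i]$ in every precisification; a symmetric induction then shows $[\xi]$ and $\xi$ agree on $\kstruct^+$, so the reduct to the original signature satisfies $\phi$.

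The main obstacle is the structural induction underpinning conservativity, which must reconcile an apparent circularity: $[\psi_i]$ inside an axiom refers to fresh predicates whose definitions refer back to $\standb{e_i} [\psi_i]$. This is resolved by observing that every modal formula $\standb{e_i}\psi_i$ is precisification-invariant -- its truth depends only on $\kstruct$ and the assignment to its free variables, not on the evaluation point $\pi$. Consequently, the fresh predicates receive a well-defined, precisification-uniform extension in $\kstruct^+$, and the biconditional axioms, though evaluated at each $\pi$, hold globally.
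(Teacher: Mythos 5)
Your proof is correct, and it is a recognisable variant of the paper's approach (a structure-preserving, Tseitin-style definitional transformation), but it differs in two substantive ways. First, the paper follows Plaisted and Greenbaum in introducing a fresh literal $L_\psi$ for \emph{every} subformula $\psi$ and adding polarity-dependent \emph{single} implications, e.g.\ $(\standb{e}\psi)^+ = (L_{\standb{e}\psi} \imp \standb{e}L_\psi) \wedge \psi^+$ for positive occurrences and the converse implication for negative ones; you name only the \emph{modal} subformulas and use full biconditionals $\forall\vec{x}_i\,(P_i(\vec{x}_i) \leftrightarrow \standb{e_i}[\psi_i])$. The polarity trick buys a slightly leaner output and is the classical route to equisatisfiability, but it does \emph{not} by itself yield a conservative extension (a model of the translation need not interpret $L_\psi$ as the truth set of $\psi$), and indeed the paper retreats to ``for us it is sufficient that this transformation is satisfiability-preserving.'' Your biconditional version pays a little in formula size but actually delivers the conservative extension that the lemma statement promises: every model of $\phi$ expands uniquely to a model of $\phi'$, and every model of $\phi'$ reducts to one of $\phi$. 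Second, you make explicit the point the paper leaves implicit, namely that $\standb{e}\psi$ is precisification-invariant under the simplified Kripke semantics, which is exactly what legitimises giving the fresh predicates a uniform extension across precisifications and breaks the apparent circularity in the definitional axioms. One small remark: the definitions are in any case stratified by modal nesting depth ($[\psi_i]$ only mentions $P_j$ for strictly deeper modal subformulas), so the induction is well-founded even before invoking invariance; invariance is what you need for the expansion $\kstruct^+$ to satisfy the biconditionals at \emph{every} precisification, as required by the global satisfaction clause $\kstruct\models\phi$ iff $\kstruct,\pi\models\phi$ for all $\pi$.
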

	\iflong
		\begin{proof}
			A conservative extension of the structure preserving transformation of Plaisted and Greenbaum~\cite{plaisted1986structure} can be easily established by including the definitions corresponding to the standpoint operators:
			\begin{itemize}
				\item $(\standbe\psi)^+ = (L_{\standbe \psi} \imp \standbe L_{\psi}) \con \psi^+$
				\item $(\standbe\psi)^- = (L_{\standbe \psi} \leftarrow \standbe L_{\psi}) \con \psi^-$
			\end{itemize}
			A proof of the correctness of the transformation can be found in the Appendix \ref{sec:fol-NF}
		\end{proof}
	\else
		Full formal details regarding the translation into SSNF can be found in Appendix A. The transformation follows standard principles. For instance, the formula $\standbe [\exists x\standde[\pred{P}(x)]]$ is transformed into
		\(\standbe[\exists x\,\pred{Q}(x)] \wedge \forall x \big( \pred{Q}(x) \to \standde[\pred{P}(x)]\big)\), where $\pred{Q}$ is a fresh, hitherto unused predicate symbol.
		We will not delve into conservative extensions; for us it is sufficient that this transformation is satisfiability-preserving.
	\fi
	For the remainder of this paper we assume that $\phi$ is in SSNF.
}

\subsection{Translation to Plain First-Order Logic}\label{sec:translation}
\iflong
	For the first-order case, we would only need to adapt the translation of atoms and add a translation rule for the first-order quantifiers and standpoint expressions.

	\begin{align*}
		\trans(\pi, P(\svec{t}))        & = P_\pi(\svec{t})                                                            \\
		\trans(\pi, \forall x\psi)      & = \forall x(\trans(\pi, \psi))                                               \\
		\trans(\pr', \standb{\ste}\psi) & = \bigland_{\pr\in\Precs_\phi}(\transE(\pr,\ste) \limplies \trans(\pr,\psi))
	\end{align*}

\else
	In this section, we
	present a translation $\Trans_n$ mapping any \FOSL formula $\phi$ to a plain FO formula $\Trans_n(\phi)$ such that $n$-satisfiability of $\phi$ coincides with satisfiability of $\Trans_n(\phi)$.
	The translation \iflong to be defined\fi
	will make explicit use of a fixed, finite set $\Precsf{n}$ of precisifications with $|\Precsf{n}|=n$.

	Our translation will map any $\phi$ into a formula of (stand\-point-free) first-order logic. The basic idea is to ``emulate'' standpoint structures $\tuple{\Dom, \Precsf{n}, \sigma, \gamma}$ in plain first-order structures over $\Dom$ by means of a ``superposition'' of all $\gamma(\pi)$, which requires to introduce $n$ ``copies'' of the original set of predicates. \skipit{(Constants need not be duplicated as they are rigid, i.e., their interpretation is standpoint-independent.) }
	To this end, we define our first-order vocabulary by $\mathbb{V}_{\!\scaleobj{0.85}{\mathrm{FO}}}(\Preds,\Consts,\Stands,\Precsf{n})=\tuple{\Preds',\Consts}$ where $\Preds'$ contains
	\begin{itemize}
		\item for each predicate \mbox{$\pred{P}\in\Preds$} and precisification \mbox{$\pr\in\Precsf{n}$}, a predicate of the form $\pred{P}_\pr$ of the same arity as $\pred{P}$, intuitively expressing that $\pred{P}_\pr$ should capture $\interpret{\pred{P}}{\gamma(\pr)}$;
		\item for each standpoint constant \mbox{$\sts\in\Stands$} and every precisification \mbox{$\pr\in\Precsf{n}$}, a nullary predicate of the form $\sts_\pr$, intuitively expressing that $\pr\in\sigma(\sts)$.
	\end{itemize}
	The top-level translation is then defined to set\/:
	\begin{equation*}
		\Trans_n(\phi) = \textstyle\bigland_{\pr\in\Precsf{n}}\trans_n(\pr,\phi) \land \textstyle\bigland_{\pr\in\Precsf{n}}{\star}_{\pr},
	\end{equation*}
	where $\trans_n$ is inductively defined by%
	\begin{align*}
		\trans_n(\pr, \pred{P}(t_1,\ldots,t_k)) & = \pred{P}_\pr(t_1,\ldots,t_k)                                                           \\
		\trans_n(\pr, \lnot\psi)                & =\lnot\trans_n(\pr,\psi)                                                                 \\
		\trans_n(\pr, \psi_1{\,\land\,}\psi_2)  & = \trans_n(\pr,\psi_1){\,\land\,}\trans_n(\pr,\psi_2)                                    \\
		\trans_n(\pr, \forall x\psi)            & = \forall x(\trans_n(\pr, \psi))                                                         \\
		\trans_n(\pr'\!, \standb{\ste}\psi)     & = \textstyle \bigland_{\pr\in\Precsf{n}}(\transE(\pr,\ste) \limplies \trans_n(\pr,\psi))
	\end{align*}
\fi
Therein, $\transE$ implements the semantics of standpoint expressions, providing for
each expression \mbox{$\ste\in\StandExps$} a propositional formula $\transE(\pr,\ste)$ over $\set{ s_\pr \guard \pr\in\Precsf{n} }$ as follows\/:%
\begin{align*}
	\transE(\pr,\sts)                  & = \sts_\pr                                        \\
	\transE(\pr,\ste_1\cup\ste_2)      & = \transE(\pr,\ste_1)\lor\transE(\pr,\ste_2)      \\
	\transE(\pr,\ste_1\cap\ste_2)      & = \transE(\pr,\ste_1)\land\transE(\pr,\ste_2)     \\
	\transE(\pr,\ste_1\setminus\ste_2) & = \transE(\pr,\ste_1)\land\neg\transE(\pr,\ste_2)
\end{align*}

%

A routine inspection of the translation ensures that it can be done in polynomial time and its output is of polynomial size, provided it is applied to formulas in SSNF.

\skipit{
	We now show that the translation is correct in the sense mentioned above.
	Consider a translation $\Trans_n(\phi)$ of some stand\-point formula $\phi$.
	Given a “plain” first-order structure $\struct$ of the resulting vocabulary over a domain $\Dom$, the associated \foss \mbox{$\kstruct_\struct=\tuple{\Dom, \Precsf{n}, \sigma, \gamma}$} is clear:
	\begin{itemize}
		\item $\sigma(s)=\set{\pr\in\Precsf{n} \guard \interprets{s_\pr}\neq\emptyset }$ and
		\item $\gamma(\pr)$ is the interpretation function defined by $\pred{P}^{\gamma(\pr)} = \pred{P}_{\!\pr}^\struct$ for $\pred{P} \in \Preds$ and $c^{\gamma(\pr)} = c^\struct$ for $c\in \Consts$.
	\end{itemize}
	For the converse direction, given some
	$\kstruct=\tuple{\Dom, \Precs, \sigma, \gamma}$ with $|\Precs|=n$, we can w.l.o.g.\ assume $\Precs = \Precsf{n}$. We obtain the plain first-order structure ${\struct_\kstruct}$ over the domain $\Delta$ as follows\/:
	\begin{itemize}
		\item For each $s\in\Stands$ and $\pr\in\Precsf{n}$, we set \mbox{$s_{\pr}^{\struct_\kstruct}=\set{()}$} 
		      if \mbox{$\pr\in\sigma(s)$}, and set \mbox{$s_\pr^{\struct_\kstruct}=\emptyset$} otherwise;
		\item for every ``original'' predicate \mbox{$\pred{P}\in\Preds$} and each \mbox{$\pr\in\Precsf{n}$}, we let \mbox{$\pred{P}_{\!\pr}^{\struct_\kstruct} = \pred{P}^{\gamma(\pr)}$};
		      similarly in spirit, for each constant \mbox{$c\in \Consts$} we let \mbox{$c^{\struct_\kstruct} = c^{\gamma(\pr)}$} for an arbitrary $\pi\in \Precsf{n}$.
		\item for every $\pr\in\Precsf{n}$, we set \mbox{$\interprets{{\star}_{\pr}}=\set{()}$}.
	\end{itemize}
	These mappings pave the way for showing soundness and completeness (i.e.\ correctness) of the proposed translation.
	\iflong
		\begin{lemma}
			Let $\phi$ be a formula of first-order standpoint logic over the signature $\tuple{\Preds,\Consts,\Stands}$.
			Then let \mbox{$\kstruct=\tuple{\Dom, \Precs_\phi, \sigma, \gamma}$} be a standard interpretation for $\tuple{\Preds,\Consts,\Stands}$ (with respect to $\phi$), and $\struct$ be a first-order interpretation for the vocabulary of $\Trans(\phi)$.
			Furthermore, assume that\/:
			\begin{enumerate}
				\item For each $s\in\Stands$ and $\pr\in\Precsf{\phi}$, we have $\interprets{s_\pr}=\set{\emptyset}$ iff $\pr\in\sigma(s)$;
				\item for each $P\in\Preds$ and $\pr\in\Precsf{\phi}$, we have \mbox{$\interprets{P_{\pr}} = \interpret{P}{\gamma(\pr)}$};
				\item for each $c\in\Consts$ and $\pr\in\Precsf{\phi}$, we have $\interpretgp{c}=\interprets{c}$.
				      (This is possible due to rigid constants, that is, the fact that $\interpret{c}{\gamma(\pr_1)}=\interpret{c}{\gamma(\pr_2)}$ for any $\pr_1,\pr_2\in\Precsf{\phi}$.)
				      More generally, this implies that for every term $t\in\Terms$ and variable assignment $\varassign:\Vars\to\Dom$, we get $\interpretgpv{t}=\interpretsv{t}$.
			\end{enumerate}
			Now for any $\pr\in\Precsf{\phi}$, $\varassign:\Vars\to\Dom$, and subformula $\psi$ of $\phi$\/:
			\[ \kstruct,\pr,\varassign\models\psi \quad\text{if and only if}\quad \struct,\varassign\models\trans(\pr,\psi) \]
		\end{lemma}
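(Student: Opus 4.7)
The plan is to proceed by straightforward structural induction on the subformula $\psi$ of $\phi$, but before starting the main induction it pays off to establish an auxiliary claim about standpoint expressions: for every $\ste\in\StandExps$ and every $\pr\in\Precsf{\phi}$, we have $\pr\in\sigmaE(\ste)$ iff $\struct\models\transE(\pr,\ste)$. This claim is proved by a simple induction on the build-up of $\ste$: the base case $\ste=\sts\in\Stands$ is exactly condition~(1) of the lemma, together with the definition $\transE(\pr,\sts)=\sts_\pr$; the inductive cases for $\cup$, $\cap$, and $\setminus$ then follow immediately because $\transE$ mirrors the set-theoretic operations with the Boolean connectives $\lor$, $\land$, and $\land\neg$.

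With the auxiliary claim in place, the main structural induction on $\psi$ is largely mechanical. For the atomic case $\psi=\pred{P}(t_1,\dots,t_k)$, condition~(2) yields $\interpret{\pred{P}}{\gamma(\pr)}=\interpret{\pred{P}_\pr}{\struct}$, and condition~(3) ensures that every term evaluates the same under $\gamma(\pr),\varassign$ as under $\struct,\varassign$; together these immediately give the desired equivalence. The cases $\psi=\neg\xi$ and $\psi=\xi_1\land\xi_2$ are immediate from the inductive hypothesis because $\trans(\pr,\cdot)$ commutes with these connectives. For $\psi=\forall x\,\xi$, the translation again commutes with the quantifier, and applying the IH at each modified assignment $\varassign_{\set{x\mapsto\de}}$ over the shared domain $\Dom$ yields the result.

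The one case that carries actual content is $\psi=\standb{\ste}\xi$. Here, by the semantics, $\kstruct,\pr,\varassign\models\standb{\ste}\xi$ iff $\kstruct,\pr'\!,\varassign\models\xi$ for every $\pr'\in\sigmaE(\ste)$; on the other hand, the translated formula $\bigland_{\pr'\in\Precsf{\phi}}(\transE(\pr',\ste)\limplies\trans(\pr',\xi))$ holds under $\struct,\varassign$ iff, for every $\pr'$ satisfying $\transE(\pr',\ste)$, also $\trans(\pr',\xi)$ holds. The auxiliary claim identifies these two quantification ranges, and the IH applied to $\xi$ identifies the two conditions on the body, so the equivalence drops out. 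The main obstacle, to the extent there is one, is bookkeeping in this modal case; critically, the fact that the same variable assignment $\varassign$ persists across the step relies on the constant-domain and rigid-constant conventions of \Cref{def:semantics}, which is what keeps the induction going even when $\xi$ has free variables under the modality.
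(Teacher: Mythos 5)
Your proof is correct and follows essentially the same two-step structure as the paper's: an auxiliary induction showing $\pr\in\sigma(\ste)$ iff $\struct\models\transE(\pr,\ste)$, followed by a structural induction on subformulas with the modal case as the only substantive step. The one (harmless) divergence is that in the $\standb{\ste}\xi$ case the paper invokes the sentential restriction to treat $\xi$ as a sentence and drop the valuation, whereas you carry $\varassign$ through uniformly — which also works, thanks to the constant-domain convention, and is in fact slightly more general.
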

		\begin{proof}
			The proof proceeds in two steps.

			(Step 1) We first use structural induction to show that for all $\ste\in\StandExps$ and $\pr\in\Precs$\/:
			$$\pr\in\sigmaE(e) \text{ iff } \struct\models\transE(\pr,\ste)$$
			\begin{description}
				\item[\normalfont$\ste=\sts$:] $\pr\in\sigmaE(\sts)$ iff $\pr\in\sigma(s)$ iff $\interprets{s_\pr}=\set{\emptyset}$ iff $\struct\models s_\pr$ iff $\struct\models\transE(\pr,s)$.
				\item[\normalfont$\ste=e_1\cup\ste_2$:] $\pr\in\sigmaE(e_1\cup e_2)$ iff $\pr\in\sigmaE(e_1)$ or $\pr\in\sigmaE(e_2)$ iff (IH) $\struct\models\transE(\pr,e_1)$ or $\struct\models\transE(\pr,e_2)$ iff $\struct\models\transE(\pr,e_1) \lor \transE(\pr,e_2)$ iff $\struct\models\transE(\pr,e_1\cup e_2)$.
				\item[\normalfont$\ste=e_1\cap\ste_2$:] Exercise.
				\item[\normalfont$\ste=\ste_1\setminus\ste_2$:] Exercise.
			\end{description}

			(Step 2) Then we use structural induction on the subformulas $\psi$ of $\phi$ to show that
			for all $\pr\in\Precsf{\phi}$ and $\varassign:\Vars\to\Dom$\/:
			\[ \kstruct,\pr,\varassign\models\psi \text{ iff } \struct,\varassign\models\trans(\pr,\psi) \]
			\begin{description}
				\item[\normalfont$\psi=P(t_1,\ldots,t_n)$:]
				      $\kstruct,\pr,\varassign\models P(t_1,\ldots,t_n)$\\
				      iff $(\interpretgpv{t_1},\ldots,\interpretgpv{t_n})\in\interpretgp{P}$\\
				      iff $(\interpretsv{t_1},\ldots,\interpretsv{t_n})\in\interprets{P_\pr}$\\
				      iff $\struct\models P_\pr(t_1,\ldots,t_n)$\\
				      iff $\struct\models\trans(\pr,P(t_1,\ldots,t_n))$.
				\item[\normalfont$\psi=\neg\xi$:] Exercise.
				\item[\normalfont$\psi=\xi_1\land\xi_2$:] Exercise.
				\item[\normalfont$\psi=\forall x \xi$:] Exercise.
				\item[\normalfont$\psi=\standb{\ste}\xi$:]
				      Then $\xi$ is a sentence and we can disregard variable valuations in what follows\/:
				      We have $\kstruct\models\standb{\ste}\xi$\\
				      iff for all $\pr'\in\sigmaE(\ste)$, we have $\kstruct,\pr'\models\xi$\\
				      iff (IH) for all $\pr'\in\sigmaE(\ste)$, we have $\struct\models\trans(\pr',\xi)$\\
				      iff for $\pr'\in\Precsf{\phi}$, if $\pr'\in\sigmaE(\ste)$, then $\struct\models\trans(\pr',\xi)$\\
				      iff for $\pr'\in\Precsf{\phi}$, if $\struct\models\transE(\pr',\ste)$, then $\struct\models\trans(\pr', \xi)$\\
				      iff for $\pr'\in\Precsf{\phi}$, $\struct\models\transE(\pr',\ste)\limplies\trans(\pr',\xi)$\\
				      iff $\struct\models\bigland_{\pr'\in\Precsf{\phi}}(\transE(\pr',\ste)\limplies\trans(\pr',\xi))$\\
				      iff $\struct\models\trans(\pr,\standb{e}\xi)$.
			\end{description}
		\end{proof}
	\else
		\begin{lemma}
			\label{thm:translation-interpretations}
			Let $\mathbb{V}=\tuple{\Preds,\Consts,\Stands}$ be a standpoint signature and let $\mathbb{V}'=\mathbb{V}_{\!\scaleobj{0.85}{\mathrm{FO}}}(\Preds,\Consts,\Stands,\Precsf{n})$.
			Then, we obtain the following for arbitrary standpoint formulas $\psi \in \FOformulas$, precisifications $\pi \in \Precsf{n}$ and variable assignments $\varassign:\Vars\to\Dom$\/:
			\begin{enumerate}
				\item For any $\mathbb{V}$-standpoint structure \mbox{$\kstruct=\tuple{\Dom, \Precs_{n}, \sigma, \gamma}$} holds\\
				      $\left.\right.\ \kstruct,\pr,\varassign\models\psi \quad\text{if and only if}\quad \struct_\kstruct,\varassign\models\trans_n(\pr,\psi).$
				\item For any $\mathbb{V}'$-interpretation \mbox{$\struct$} holds\\
				      $\kstruct_\struct,\pr,\varassign\models\psi \quad\text{if and only if}\quad \  \ \ \struct,\varassign\models\trans_n(\pr,\psi).\ \ $
			\end{enumerate}
		\end{lemma}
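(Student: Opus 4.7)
My plan is to prove both parts of the lemma in a single structural induction on $\psi$, observing that the constructions $\struct_\kstruct$ and $\kstruct_\struct$ were designed precisely to yield the same bi-directional correspondences between the two representations. Specifically, in either case: (i) $\pr \in \sigma(\sts)$ iff $\interprets{\sts_\pr} \neq \emptyset$; (ii) $\interpret{\pred{P}}{\gamma(\pr)} = \interprets{\pred{P}_\pr}$ for every predicate $\pred{P}$ and every $\pr \in \Precsf{n}$; and (iii) $\interpret{c}{\gamma(\pr)} = \interprets{c}$ for every constant $c$ (exploiting rigidity, which makes the right-hand side independent of $\pr$). From these it follows that $\interpretgpv{t} = \interpretsv{t}$ for every term $t$ and every variable assignment $\varassign$, since the only variable part of term evaluation is $\varassign$ itself, which is shared.

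Before the main induction, I would establish an auxiliary claim by structural induction on $\ste \in \StandExps$: for every $\pr \in \Precsf{n}$, $\pr \in \sigmaE(\ste)$ iff $\struct \models \transE(\pr, \ste)$. The base case $\ste = \sts$ is immediate from correspondence~(i), and the inductive steps for $\cup$, $\cap$ and $\setminus$ follow trivially because $\transE$ was designed to match these set operations by the corresponding propositional connectives.

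For the main induction, the atomic case $\psi = \pred{P}(t_1,\ldots,t_k)$ is discharged by~(ii) together with the equality of term interpretations derived from~(iii); the cases for $\neg$, $\land$ and $\forall x$ are routine because $\trans_n$ commutes with these constructs and the same $\varassign$ (respectively $\varassign_{\set{x \mapsto \de}}$) is threaded through unchanged. The principal obstacle, and the only case that really deserves attention, is $\psi = \standb{\ste}\xi$. Here the standpoint semantics universally quantifies over $\pr' \in \sigmaE(\ste)$, while the translation produces the guarded conjunction $\bigland_{\pr' \in \Precsf{n}} (\transE(\pr', \ste) \limplies \trans_n(\pr', \xi))$ ranging over all precisifications. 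The auxiliary claim bridges these two presentations by replacing the membership condition ``$\pr' \in \sigmaE(\ste)$'' with the satisfaction condition ``$\struct \models \transE(\pr', \ste)$'', so that the guard in the conjunction selects exactly the intended precisifications; the inductive hypothesis then matches $\kstruct, \pr', \varassign \models \xi$ with $\struct, \varassign \models \trans_n(\pr', \xi)$, closing the case.

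A minor subtlety worth flagging is that $\xi$ under a modality need not be a sentence (the lemma is stated for arbitrary $\psi \in \FOformulas$, not only sentential ones), so I must carry $\varassign$ through the modality case rather than discarding it. This is unproblematic because the standpoint semantics of $\standb{\ste}$ reuses the ambient $\varassign$, and $\trans_n$ neither binds nor renames variables; hence the same $\varassign$ appears on both sides of the equivalence throughout the induction.
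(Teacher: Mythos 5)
Your proof is correct and follows essentially the same route as the paper's: a preliminary structural induction showing $\pr\in\sigma(\ste)$ iff $\struct\models\transE(\pr,\ste)$, followed by a structural induction on $\psi$ using the defining correspondences of $\struct_\kstruct$ and $\kstruct_\struct$ for the atomic case and the auxiliary claim for the modal case. Your handling of $\standb{\ste}\xi$ by threading the ambient assignment $\varassign$ through (rather than assuming $\xi$ is a sentence, as the paper's write-up does) is if anything the more faithful treatment of the lemma as stated for arbitrary $\psi\in\FOformulas$, and it goes through because neither the semantics of $\standb{\ste}$ nor $\trans_n$ alters the variable assignment.
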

	\fi
	Using this lemma, we can show the announced property.}

\begin{theorem}\label{thm:tanslation-nsat-correct}
	A formula $\phi$ is $n$-satisfiable in \FOSL 
	if and only if
	the formula $\Trans_n(\phi)$ is satisfiable in first-order logic.
\end{theorem}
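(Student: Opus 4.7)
The plan is to exhibit, for each direction of the equivalence, a mutual construction between $n$-precisification FOSL structures and plain FO structures over the enlarged vocabulary $\mathbb{V}_{\!\scaleobj{0.85}{\mathrm{FO}}}(\Preds,\Consts,\Stands,\Precsf{n})$, and then to establish a correspondence lemma by structural induction. For the forward direction, given a model $\kstruct = \tuple{\Dom, \Precs, \sigma, \gamma}$ of $\phi$ with $|\Precs| = n$, I would identify $\Precs$ with $\Precsf{n}$ and define an FO structure $\struct_\kstruct$ over $\Dom$ by setting $\pred{P}_\pi^{\struct_\kstruct} = \pred{P}^{\gamma(\pi)}$ for each original predicate symbol, making each nullary $\sts_\pi$ true iff $\pi \in \sigma(\sts)$, making $\star_\pi$ true for every $\pi$, and interpreting constants uniformly across $\pi$ (well-defined by rigidity). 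For the reverse direction, given an FO model $\struct$ of $\Trans_n(\phi)$, I would construct $\kstruct_\struct = \tuple{\Dom, \Precsf{n}, \sigma, \gamma}$ by reading off $\sigma(\sts) = \set{\pi \in \Precsf{n} \mid \struct \models \sts_\pi}$ and using the $\pi$-subscripted symbols of $\struct$ to specify each $\gamma(\pi)$.

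The core step is then a correspondence lemma stating that, for every subformula $\psi$ of $\phi$ (assumed to be in SSNF), every $\pi \in \Precsf{n}$, and every variable assignment $\varassign \colon \Vars \to \Dom$, we have $\kstruct, \pi, \varassign \models \psi$ iff $\struct, \varassign \models \trans_n(\pi, \psi)$. As a prerequisite, a separate induction on $\ste \in \StandExps$ would show $\pi \in \sigmaE(\ste)$ iff $\struct \models \transE(\pi, \ste)$; here the base case $\ste = \sts$ follows directly from how $\sts_\pi$ was constructed, while the inductive cases exploit the one-to-one match between the set operations $\cup, \cap, \setminus$ and their Boolean counterparts $\lor, \land, \land\neg$ in $\transE$. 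The main induction on formulas then treats the atomic, Boolean, and first-order quantifier cases routinely, with the constant-domain assumption ensuring that $\varassign$ stays globally coherent across precisifications.

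The main obstacle is the modal case $\psi = \standb{\ste}\xi$. Here one must verify that $\kstruct, \pi, \varassign \models \standb{\ste}\xi$, which semantically unfolds to $\kstruct, \pi', \varassign \models \xi$ for every $\pi' \in \sigmaE(\ste)$, is equivalent to $\struct, \varassign \models \bigland_{\pi' \in \Precsf{n}} \bigl( \transE(\pi', \ste) \limplies \trans_n(\pi', \xi) \bigr)$. This follows from combining the standpoint-expression correspondence (turning $\pi' \in \sigmaE(\ste)$ into $\struct \models \transE(\pi', \ste)$) with the induction hypothesis applied to $\xi$; the SSNF assumption is useful here because $\xi$ contains no further modal operators, so no subtle interaction between nested precisifications needs to be tracked. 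Once the lemma is established, the theorem follows by observing that $\kstruct \models \phi$ iff $\kstruct, \pi \models \phi$ for every $\pi \in \Precsf{n}$, which matches the top-level $\bigland_\pi \trans_n(\pi, \phi)$; the auxiliary conjuncts $\bigland_\pi \star_\pi$ of $\Trans_n(\phi)$ are trivially satisfied in $\struct_\kstruct$ and, conversely, are precisely what forces the reverse construction to treat the universal standpoint $*$ as the full $\Precsf{n}$.
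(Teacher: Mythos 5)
Your proposal is correct and follows essentially the same route as the paper: the same pair of mutual constructions $\struct_\kstruct$ and $\kstruct_\struct$, the same correspondence lemma ($\kstruct,\pi,\varassign\models\psi$ iff $\struct,\varassign\models\trans_n(\pi,\psi)$) proved by structural induction with a preliminary induction establishing $\pi\in\sigma(\ste)$ iff $\struct\models\transE(\pi,\ste)$, and the same treatment of the top-level conjunction and the $\star_\pi$ conjuncts. The only cosmetic difference is that you lean on SSNF in the modal case, whereas the key point there is simply that both the semantics of $\standb{\ste}$ and its translation discard the current precisification and range over all of $\sigma(\ste)$ while carrying the same variable assignment; SSNF matters for keeping the translation polynomial, not for its correctness.
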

\skipit{
	\begin{proof}
		\begin{description}[labelwidth=*]
			\item[\normalfont“if” (Soundness):]
			      Let $\struct\models\Trans_n(\phi)$.
			      We use the interpretation $\kstruct_\struct$ defined before and show that $\kstruct_\struct\models\phi$.
			      Firstly, we obtain $\sigma(\star)=\Precsf{n}$ by the second conjunct.
			      Since $\kstruct_\struct\models\phi$ iff $\kstruct_\struct,\pr\models\phi$ for all $\pr\in\Precsf{n}$, we show the latter.
			      Consider any $\pr\in\Precsf{n}$.
			      Clearly $\struct\models\trans_n(\pr,\phi)$ by assumption.
			      By \Cref{thm:translation-interpretations} above, $\kstruct_\struct,\pr\models\phi$.
			\item[\normalfont“only if” (Completeness):]
			      Let $\kstruct=\tuple{\Dom, \Precsf{n}, \sigma, \gamma}$ be such that \mbox{$\kstruct\models\phi$}.
			      We use the first-order structure $\struct_\kstruct$ defined above and show that $\struct_\kstruct\models\Trans_n(\phi)$. 
			      \begin{enumerate}
				      \item $\bigland_{\pr\in\Precsf{n}}\!\trans_n(\pr,\phi)$:
				            By assumption, \mbox{$\kstruct\models\phi$}, that is, \mbox{$\kstruct,\pr\models\phi$} for each \mbox{$\pr\in\Precsf{n}$}.
				            By \Cref{thm:translation-interpretations} above, in each such case we get that \mbox{$\struct_\kstruct\models\trans_n(\pr,\phi)$}.
				            Overall, it follows that \mbox{$\struct_\kstruct\models\bigland_{\pr\in\Precsf{n}}\trans_n(\pr,\phi)$}.
				      \item $\bigland_{\pr\in\Precsf{n}}\!{\star}_{\pr}$: By definition, $\struct_\kstruct\models {\star}_{\pr}$ for any $\pr\in\Precsf{n}$.
			      \end{enumerate}
		\end{description}
	\end{proof}}

In fact, \Cref{thm:tanslation-nsat-correct} provides us with a recipe for a satis\-fia\-bi\-li\-ty-pre\-ser\-ving translation for any formula that comes with a ``small model guarantee'', whenever a bound on the number of precisifications can be computed upfront.
In particular, leveraging \Cref{thm:small-model-property}, we obtain the following corollary.

\begin{corollary}
	A formula $\phi$ is satisfiable in sentential first-order standpoint logic if and only if the formula $\Trans_{|\phi|}(\phi)$ is satisfiable in first-order logic.
\end{corollary}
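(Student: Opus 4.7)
The plan is simply to chain the two immediately preceding results, using the bound on the number of precisifications delivered by the small model property as the value of $n$ in the translation. Specifically, I would instantiate \Cref{thm:tanslation-nsat-correct} with $n = |\phi|$, which reduces the task to showing that $\phi$ is satisfiable in sentential \FOSL if and only if it is $|\phi|$-satisfiable. That equivalence is exactly the statement of \Cref{thm:small-model-property}, given that $\phi$ is sentential.

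In more detail, the forward direction unfolds as follows. Assume $\phi$ is satisfiable in sentential \FOSL. By \Cref{thm:small-model-property}, there exists a model $\kstruct = \tuple{\Dom, \Precs, \sigma, \gamma}$ of $\phi$ with $|\Precs| \le |\phi|$. Without loss of generality we may take $|\Precs| = |\phi|$ (padding the set of precisifications with duplicates of an existing one, which preserves satisfaction since the sentential restriction means modal subformulas are evaluated uniformly across precisifications). Thus $\phi$ is $|\phi|$-satisfiable, and \Cref{thm:tanslation-nsat-correct} yields satisfiability of $\Trans_{|\phi|}(\phi)$ in first-order logic. The converse is immediate: satisfiability of $\Trans_{|\phi|}(\phi)$ gives, via \Cref{thm:tanslation-nsat-correct}, a model witnessing $|\phi|$-satisfiability of $\phi$, hence in particular satisfiability.

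There is no substantial obstacle; the only minor technicality is the padding step, needed because \Cref{thm:tanslation-nsat-correct} is phrased in terms of exact $n$-satisfiability while \Cref{thm:small-model-property} only guarantees \emph{at most} $|\phi|$ precisifications. I would also note in passing that the corollary tacitly presupposes $\phi$ to be in SSNF, which is harmless since the transformation into SSNF is polynomial and satisfiability-preserving.
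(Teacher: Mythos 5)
Your proof is correct and follows exactly the paper's route: the corollary is obtained by chaining \Cref{thm:small-model-property} with \Cref{thm:tanslation-nsat-correct} instantiated at $n=|\phi|$. Your padding remark properly discharges the only gap (``at most $|\phi|$'' versus exactly $|\phi|$ precisifications) that the paper's theorem statement glosses over, so nothing is missing.
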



\section{Expressive Decidable \FOSL Fragments}\label{sec:fragments}

We note that even for the sentential version, first-order standpoint logic is still a generalization of plain first-order logic, whence reasoning in it is undecidable.
Therefore, we will next look into some popular decidable FO fragments and establish decidability and complexity results for reasoning in their sentential standpoint versions.

\begin{definition}
    Let $\mathcal{F}$ denote some FO fragment.
    Then the logic \emph{sentential Standpoint-$\mathcal{F}$}, denoted $\mathbb{S}_{[\mathcal{F}]}$, contains the sentential FOSL formulas $\phi$ where\/:
    \begin{itemize}
        \item all variables inside $\phi$ are bound by some quantifier,
        \item for every subformula $\psi \in \Sub(\mathrm{SSNF}(\phi))$ preceded by a quantifier, $\psi \in \mathcal{F}$ holds.
    \end{itemize}
    Fragment $\mathcal{F}$ is \emph{standpoint-friendly} iff every $\phi \in \mathbb{S}_{[\mathcal{F}]}$ satisfies  $\Trans_{|\phi|}(\mathrm{SSNF}(\phi)) \in \mathcal{F}$.
\end{definition}

\begin{lemma}\label{lem:sfriendly}
    Let $\mathcal{F}$ be a standpoint-friendly fragment of FOL.
    Then the following hold:
    \begin{enumerate}
        \item Satisfiability for $\mathbb{S}_{[\mathcal{F}]}$ is decidable if and only if it is for $\mathcal{F}$.
        \item If the satisfiability problem in $\mathcal{F}$ is at least \textsc{NP}-hard, then the satisfiability problem in $\mathbb{S}_{[\mathcal{F}]}$ is of the same complexity as in $\mathcal{F}$.
    \end{enumerate}
\end{lemma}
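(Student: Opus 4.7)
The plan is to exploit two ingredients already at hand: the Corollary from Section~\ref{sec:translation}, which tells us that sentential \FOSL satisfiability of $\phi$ reduces to plain FO satisfiability of $\Trans_{|\phi|}(\mathrm{SSNF}(\phi))$, and the standpoint-friendliness assumption, which guarantees that this reduction actually lands inside $\mathcal{F}$. Both translations involved (into SSNF and then via $\Trans_{|\phi|}$) run in polynomial time and produce output of polynomial size, so the reduction is a polynomial-time many-one reduction from $\mathbb{S}_{[\mathcal{F}]}$-satisfiability to $\mathcal{F}$-satisfiability.

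For item~(1), I would argue both directions separately. The ``only if'' direction is immediate: $\mathcal{F}$ embeds syntactically into $\mathbb{S}_{[\mathcal{F}]}$, since any formula of $\mathcal{F}$ (all of whose variables are bound by a quantifier) is trivially a sentential \FOSL formula containing no standpoint operators at all, and its SSNF is itself. Hence decidability of $\mathbb{S}_{[\mathcal{F}]}$ entails decidability of $\mathcal{F}$. For the ``if'' direction, I would explicitly describe the decision procedure for $\mathbb{S}_{[\mathcal{F}]}$: given $\phi$, compute $\psi = \mathrm{SSNF}(\phi)$, then compute $\Trans_{|\phi|}(\psi)$; by standpoint-friendliness, the result lies in $\mathcal{F}$, and by the Corollary together with satisfiability preservation of the SSNF transformation, the latter is FO-satisfiable iff $\phi$ is \FOSL-satisfiable. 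Decidability of $\mathcal{F}$ then settles the question.

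For item~(2), the same reduction gives the upper bound: it is polynomial-time, so whenever the target class ($\mathcal{F}$-satisfiability) is closed under polynomial-time reductions---which it is for every complexity class at least as hard as \textsc{NP}---we obtain that $\mathbb{S}_{[\mathcal{F}]}$-satisfiability is no harder than $\mathcal{F}$-satisfiability. The matching lower bound is again handed to us for free by the syntactic embedding of $\mathcal{F}$ into $\mathbb{S}_{[\mathcal{F}]}$: any \textsc{NP}-hardness (or higher) lower bound of $\mathcal{F}$ transfers directly to $\mathbb{S}_{[\mathcal{F}]}$.

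No step is genuinely hard here; the ``obstacle'', such as it is, lies in being careful about two minor points. First, one must verify that the SSNF normalisation itself stays inside $\mathbb{S}_{[\mathcal{F}]}$ in the sense the definition requires (so that standpoint-friendliness can be invoked on $\mathrm{SSNF}(\phi)$ rather than on $\phi$); this is built into the definition, which quantifies over subformulas of $\mathrm{SSNF}(\phi)$. Second, the \textsc{NP}-hardness condition in item~(2) is necessary precisely to absorb the polynomial blow-up of $\Trans_{|\phi|}$; if $\mathcal{F}$ were decidable below \textsc{P}, the polynomial-time reduction would not automatically transfer the upper bound, which is why the statement is phrased the way it is.
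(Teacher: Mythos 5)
Your proposal is correct and follows exactly the route the paper intends: the upper bounds come from composing the polynomial SSNF normalisation with the translation $\Trans_{|\phi|}$, whose image lands in $\mathcal{F}$ by the very definition of standpoint-friendliness, while decidability of $\mathcal{F}$ from $\mathbb{S}_{[\mathcal{F}]}$ and the matching lower bound both follow from the trivial syntactic embedding of $\mathcal{F}$ into $\mathbb{S}_{[\mathcal{F}]}$. Your closing remarks on why the definition quantifies over subformulas of $\mathrm{SSNF}(\phi)$ and why the \textsc{NP}-hardness hypothesis is needed to absorb the polynomial blow-up are precisely the two points the statement's phrasing is designed to accommodate.
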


It turns out that many popular formalisms are stand\-point-friend\-ly.
For propositional logic (PL), this is straightforward: quantifiers and variables are absent altogether, which is also the reason why sentential Standpoint-PL and proper Standpoint-PL coincide. Thus, \Cref{lem:sfriendly} yields an alternative argument for the NP-completeness of the latter, which was established previously \cite{gomez2021standpoint}.

On the expressive end of the logical spectrum, it is worthwhile to inspect fragments of FO that are still decidable (as a minimal requirement for the feasibility of automated reasoning).
In fact, standpoint-friendliness can be established by structural induction for many of those. Notable examples are\/:
\begin{itemize}
    \item the \emph{counting 2-vari\-able frag\-ment} $\mathrm{C}^2$ \cite{GradelOR97,Pratt-Hartmann05}, which subsumes many \emph{description logics} and serves as a mathematical backbone for related complexity results,
    \item the \emph{guar\-ded negation frag\-ment} $\mathrm{GNFO}$ \cite{GuardedNegation,BaranyBC18}, which encompasses both the popular \emph{guarded fragment} as well as the ubiquitous class of \emph{(unions of) conjunctive queries} also known as \emph{existential positive FO}, and
    \item the tri\-guar\-ded frag\-ment $\mathrm{TGF}$ \cite{RS18,KieronskiR21} (a more recent formalism subsuming both the two-variable and the guarded fragment without equality).
\end{itemize}
Intuitively, standpoint-friendliness for all these (and presumably many more) fragments follows from the fact that they are closed under Boolean combinations of sentences and that the transformation does not affect the structure of quantified formulas.
We therefore immediately obtain that these four popular decidable fragments of FOL allow for accommodating standpoints without any increase in complexity.
\begin{corollary}
    The sentential FOSL fragments $\mathbb{S}_{[\mathrm{PL}]}\ ({=}\,\mathbb{S}_{\mathrm{PL}})$, $\mathbb{S}_{[\mathrm{C}^2]}$, $\mathbb{S}_{[\mathrm{GNFO}]}$, and $\mathbb{S}_{[\mathrm{TGF}]}$ are all decidable and the complexity of their satisfiability problem
    is complete for \textsc{NP}, \textsc{NExpTime}, \textsc{2ExpTime}, and \textsc{N2ExpTime}, respectively.
\end{corollary}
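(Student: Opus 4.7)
The plan is to reduce the entire corollary to an application of Lemma~\ref{lem:sfriendly}, so I need to (a) verify that each of the four fragments is standpoint-friendly, and (b) invoke the known satisfiability complexities: \textsc{NP}-complete for $\mathrm{PL}$ (Cook), \textsc{NExpTime}-complete for $\mathrm{C}^2$ \cite{Pratt-Hartmann05}, \textsc{2ExpTime}-complete for $\mathrm{GNFO}$ \cite{BaranyBC18}, and \textsc{N2ExpTime}-complete for $\mathrm{TGF}$ \cite{KieronskiR21}. The matching lower bounds for $\mathbb{S}_{[\mathcal{F}]}$ come for free, since $\mathcal{F}$ embeds into $\mathbb{S}_{[\mathcal{F}]}$ (formulas without modal operators are trivially sentential and their satisfiability coincides with $1$-satisfiability of the embedded formula).

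For $\mathrm{PL}$ the argument is degenerate: sentential formulas contain no variables or quantifiers, so $\mathbb{S}_{[\mathrm{PL}]}$ coincides with the previously studied $\mathbb{S}_{\mathrm{PL}}$, and $\Trans_{|\phi|}(\phi)$ is again a propositional formula. This recovers the known \textsc{NP}-completeness of propositional standpoint logic, as noted in the excerpt.

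For $\mathrm{C}^2$, $\mathrm{GNFO}$, and $\mathrm{TGF}$, I would establish standpoint-friendliness by structural induction on $\phi \in \mathbb{S}_{[\mathcal{F}]}$. The crucial fact is that in SSNF no modal operator lies inside the scope of another, so every subformula preceded by a quantifier is a plain $\mathcal{F}$-formula by definition of $\mathbb{S}_{[\mathcal{F}]}$. The translation $\Trans_{|\phi|}$ then acts in two ways: on quantified subformulas it merely renames each predicate $\pred{P}$ to an indexed copy $\pred{P}_\pi$ of the same arity, which each of the three fragments is closed under; on the outer modal/Boolean scaffolding it produces a Boolean combination of such renamed $\mathcal{F}$-sentences together with propositional atoms $\sts_\pi$ and $\star_\pi$ of arity $0$. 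Since $\mathrm{C}^2$, $\mathrm{GNFO}$, and $\mathrm{TGF}$ are all closed under Boolean combinations of sentences and admit nullary predicates (which are vacuously two-variable, vacuously guarded-negation, and vacuously triguarded), the full translation $\Trans_{|\phi|}(\mathrm{SSNF}(\phi))$ lands in $\mathcal{F}$.

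The step I expect to require the most care is verifying that the SSNF transformation itself preserves membership in each fragment. The potential pitfall is that Plaisted--Greenbaum-style renaming introduces fresh auxiliary predicates as placeholders for modal subformulas, and in general such predicates could carry free variables, which would threaten the two-variable budget of $\mathrm{C}^2$ or the guardedness requirements of $\mathrm{GNFO}$ and $\mathrm{TGF}$. However, sententiality precisely ensures that every modal subformula is already closed, so the auxiliary predicates replacing them are all nullary, hence harmless for all three fragments. Once this is in place, Lemma~\ref{lem:sfriendly}(2) delivers the stated upper bounds, matching the lower bounds noted above and yielding the completeness results.
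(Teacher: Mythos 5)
Your proposal is correct and follows essentially the same route as the paper: establish standpoint-friendliness of each fragment (via closure under Boolean combinations of sentences, preservation of quantified subformula structure under the predicate-renaming translation, and the harmlessness of the nullary auxiliary/standpoint predicates), then invoke Lemma~\ref{lem:sfriendly} together with the known completeness results for $\mathrm{PL}$, $\mathrm{C}^2$, $\mathrm{GNFO}$, and $\mathrm{TGF}$. Your extra care about the SSNF renaming introducing only \emph{nullary} placeholders (thanks to sententiality) is exactly the point the paper relies on implicitly, so nothing is missing.
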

As an aside, we note that all these results remain valid when considering \emph{finite satisfiability} (i.e., restricting to models with finite $\Delta$), because for all considered fragments, companion results for the finite-model case exist and the equisatisfiability argument for our translation preserves (finiteness of) $\Delta$.

\section{Sentential Stand\-point-\texorpdfstring{$\mathcal{SROIQ}b_s$}{SROIQbs}}\label{sec:SSSROIQ}

We next present the highly expressive yet decidable logic \emph{(Sentential) Stand\-point-$\SROIQbs$}, which adds the feature of stand\-point-aware modelling to $\mathcal{SROIQ}b_s$, a description logic (DL) obtained from the well-known DL $\mathcal{SROIQ}$~\cite{DBLP:conf/kr/HorrocksKS06} by a gentle extension of its expressivity, allowing safe Boolean role expressions over simple roles~\cite{DBLP:conf/jelia/RudolphKH08}.\footnote{Focusing on the mildly stronger $\SROIQbs$ instead of the more mainstream $\mathcal{SROIQ}$ allows for a more coherent and economic presentation, without giving up the good computational properties and the availability of optimised algorithms and tools.}
The $\mathcal{SROIQ}$ family serves as the logical foundation of popular ontology languages like OWL~2~DL.
In view of the fact that $\mathcal{SROIQ}b_s$ is a semantic fragment of FO, we can leverage the previously established results and present a sa\-tis\-fi\-a\-bi\-li\-ty-preserving polynomial translation from Stand\-point-$\SROIQbs$ into plain $\SROIQbs$ knowledge bases.
On the theoretical side, this will directly provide us with favourable and tight complexity results for reasoning in Stand\-point-$\SROIQbs$.
On the practical side, this paves the way towards practical reasoning in ``Stand\-point-OWL'', since 
it allows us to use highly optimised OWL~2~DL reasoners off the shelf.
\subsection{\texorpdfstring{$\mathcal{SROIQ}b_s$}{SROIQbs}: Syntax and Semantics}
Let $\Consts$, $\Preds_1$, and $\Preds_2$ be finite, mutually disjoint sets called \emph{individual names}, \emph{concept names} and \emph{role names}, respectively. $\Preds_2$ is subdivided into \emph{simple role names} $\Preds^\mathrm{s}_2$ and \emph{non-simple role names} $\Preds^\mathrm{ns}_2$, the latter containing the \emph{universal role} $\rolU$ and being strictly ordered by some strict order $\prec$.
In the original definition of $\mathcal{SROIQ}b_s$, simplicity of roles and $\prec$ are not given a priori, but meant to be implicitly determined by the set of axioms. Our choice to fix them explicitly upfront simplifies the presentation without restricting expressivity.
Then, the set $\mathcal{R}^\mathrm{s}$ of \emph{simple role expressions} is defined by
\begin{narrowalign}
	$\rolexpR_1,\rolexpR_2 \ebnfeq \rolS \mid \rolS^- \mid \rolexpR_1\cup\rolexpR_2 \mid \rolexpR_1\cap\rolexpR_2 \mid \rolexpR_1\setminus\rolexpR_2$,
\end{narrowalign}
with $\rolS {\,\in\,} \Preds^\mathrm{s}_2$, while the set of (arbitrary) \emph{role expressions} is $\mathcal{R} {\,=\,} \mathcal{R}^\mathrm{s} {\,\cup\,} \Preds^\mathrm{ns}_2$\!. The order $\prec$ is then extended to $\mathcal{R}$ by making all elements of $\mathcal{R}^\mathrm{s}$ $\prec$-minimal.
The syntax of \emph{concept expressions} is given by%
	{\small
		\begin{align*}
			\conC,\conD \ebnfeq \conA \mid \{a\} \mid \top \mid \bot \mid \neg\conC \mid \conC\sqcup\conD \mid \conC\sqcup\conD \mid \forall\rolexpR.\conC \mid \exists\rolexpR.\conC \mid \exists\rolexpR'\!.\mathit{Self} \mid \atmost{n}\rolexpR'\!.C \mid \atleast{n}\rolexpR'\!.C,
		\end{align*}
	}%
with \mbox{$\conA\in \Preds_1$}, \mbox{$a\in \Consts$}, \mbox{$\rolexpR \in  \mathcal{R}$}, \mbox{$\rolexpR' \in  \mathcal{R}^\mathrm{s}$}, and \mbox{$n\in \mathbb{N}$}.
We note that any concept expression can be put in negation normal form, where negation only occurs in front of concept names, nominals, or $\mathit{Self}$ concepts.
The different types of $\mathcal{SROIQ}b_s$ sentences (called \emph{axioms}) are given in Table~\ref{tab:axm}.\footnote{The original definition of \sroiq contained more axioms (role transitivity, (a)symmetry, (ir)reflexivity and disjointness), but these are syntactic sugar in our setting.}

\newcommand{\tuplei}[1]{(#1)}
\noindent\begin{table}[t]
	\setlength{\tabcolsep}{-0.5em}
	\begin{tabular}{ll}
		{
			~\hspace{\stretch{1.5}}
			~\vspace{\stretch{1.5}}
			\!\!\!\!\scalebox{.71}{
				\begin{tabular}[t]{@{}l@{\ \ \,}l@{\ \ \,}l@{}}
					\hline                                                                                                                                                              \\[-2ex]
					Name                & Syntax                            & Semantics                                                                                                 \\\hline
					inverse role        & $\rolS^-$                         & $\{\tuplei{\delta,\delta'}\in\Delta\times\Delta \mid \tuplei{\delta',\delta} \in \rolS^\Inter\}$          \\
					role union          & $\rolexpR_1 \cup \rolexpR_2$      & $\rolexpR_1^\Inter \cup \rolexpR_2^\Inter$                                                                \\
					role intersection   & $\rolexpR_1 \cap \rolexpR_2$      & $\rolexpR_1^\Inter \cup \rolexpR_2^\Inter$                                                                \\
					role difference     & $\rolexpR_1 \setminus \rolexpR_2$ & $\rolexpR_1^\Inter \setminus \rolexpR_2^\Inter$                                                           \\
					universal role      & $\rolU$                           & $\Delta^\Inter\times\Delta^\Inter$                                                                        \\
					\hline                                                                                                                                                              \\[-2ex]			                                                                                  nominal             & $\{a\}$                           & $\{a^{\Inter}\}$                                                                                          \\
					top                 & $\top$                            & $\Delta^\Inter $                                                                                          \\  
					bottom              & $\bot$                            & $\emptyset$                                                                                               \\  
					negation            & $\neg \conC$                      & $\Delta^\Inter \setminus \conC^{\Inter}$                                                                  \\  
					conjunction         & $\conC\sqcap \conD$               & $\conC^{\Inter}\cap \conD^{\Inter}$                                                                       \\  
					disjunction         & $\conC\sqcup \conD$               & $\conC^{\Inter}\cup \conD^{\Inter}$                                                                       \\  
					univ. restriction   & $\forall \rolexpR.\conC$          & $\{\delta \mid \forall y. \tuplei{\delta,\delta'} \in \rolexpR^{\Inter} \to \delta'\in \conC^{\Inter}\}$  \\  
					exist. restriction  & $\exists \rolexpR.\conC$          & $\{\delta \mid \exists y. \tuplei{\delta,\delta'}\in\rolexpR^{\Inter} \wedge \delta'\in \conC^{\Inter}\}$ \\  
					$\Self$ concept     & $\exists\rolexpR.\Self$           & $\{\delta \mid \tuplei{\delta,\delta}\in\rolexpR^{\Inter}\}$                                              \\
					qualified number    & $\atmost{n}\rolexpR.C$            & $\{\delta \mid \#\{\delta'\in \conC^{\Inter}\mid \tuplei{\delta,\delta'} \in \rolexpR^{\Inter}\}\le n\}$  \\
					\qquad restrictions & $\atleast{n}\rolexpR.C$           & $\{\delta \mid \#\{\delta'\in \conC^{\Inter}\mid \tuplei{\delta,\delta'} \in \rolexpR^{\Inter}\}\ge n\}$  \\
					\hline                                                                                                                                                              \\[-2ex]
				\end{tabular}
			}\hspace{\stretch{1.5}}~
		}
		%
		%
		 & {
				~\hspace{\stretch{1.5}}
				~\vspace{\stretch{1.5}}

				\!\scalebox{.71}{
					\begin{tabular}[t]{@{}l@{}r@{\ \ }r@{}}
						\hline                                                                                                                                                                                                                                                       \\[-2ex]
						Name                           & Syntax                                                                                & Semantics                                                                                                                           \\\hline\\[-2ex]
						concept assertion\hspace{-5ex} & $C(a)$                                                                                & $a^\mathcal{I}\in C^\mathcal{I}$                                                                                                    \\
						role assertion                 & $\rolexpR(a,b)$                                                                       & $(a^\mathcal{I},b^\mathcal{I})\in \rolR^\mathcal{I}$                                                                                \\
						equality                       & $a\doteq b$                                                                           & $a^\mathcal{I}=b^\mathcal{I}$                                                                                                       \\
						inequality                     & $a\not\doteq b$                                                                       & $a^\mathcal{I}\neq b^\mathcal{I}$                                                                                                   \\\hline \\[-2ex]
						general concept \hspace{-6ex}  & $C{\,\sqsubseteq\,} D$\!\!                                                            & $C^\mathcal{I}\subseteq D^\mathcal{I}$                                                                                              \\
						~inclusion (GCI) \hspace{-6ex} &                                                                                       &
						\\\hline\\[-2ex]
						role inclusion                 & $\rolexpR_1{\circ}\ldots{\circ}\tad\rolexpR_n{\,\sqsubseteq\,} \rolR$                 & $\rolexpR_1^\mathcal{I}{\circ}\ldots{\circ}\tad\rolexpR_n^\mathcal{I}{\,\subseteq\,} \rolR^\mathcal{I}$                             \\
						~axioms                        & $\rolexpR_1{\circ}\ldots{\circ}\tad\rolexpR_n{\circ}\tad\rolR{\,\sqsubseteq\,} \rolR$ & $\rolexpR_1^\mathcal{I}{\circ}\ldots{\circ}\tad\rolexpR_n^\mathcal{I}{\circ}\tad\rolR^\mathcal{I}{\,\subseteq\,} \rolR^\mathcal{I}$ \\
						~(RIAs)                        & $\rolR{\circ}\tad\rolexpR_1{\circ}\ldots{\circ}\tad\rolexpR_n{\,\sqsubseteq\,} \rolR$ & $\rolR^\mathcal{I}{\circ} \rolexpR_1^\mathcal{I}{\circ}\ldots{\circ}\tad\rolexpR_n^\mathcal{I}{\,\subseteq\,} \rolR^\mathcal{I}$    \\  			                                         & $\rolR{\circ}\tad\rolR{\,\sqsubseteq\,} \rolR$                                     & $\rolR^\mathcal{I}{\circ}\tad\rolR^\mathcal{I}{\,\subseteq\,} \rolR^\mathcal{I}$                                              \\\hline\\[-1ex]
						In RIAs, $\rolR \in \Preds^\mathrm{ns}_2$, while $\rolexpR_i \in \mathcal{R}$ and $\rolexpR_i \prec \rolR$ for all \hspace{-10cm}                                                                                                                            \\
						$i\in \{1,\ldots,n\}$.                                                                                                                                                                                                                                       \\
					\end{tabular}
				}
			}

		\hspace{\stretch{1.5}}~
	\end{tabular}
	\vspace{0.5ex}

	\caption{\small$\mathcal{SROIQ}b_s$ role, concept expressions and axioms. $C\equiv D$ abbreviates $C\sqsubseteq D$, $D\sqsubseteq C$.
	}\label{tab:axm}\label{tab:SROIQ}
	\vspace{-2em}
\end{table}

Similar to FOL, the semantics of $\mathcal{SROIQ}b_s$ is defined via interpretations \mbox{$\mathcal{I}=(\Delta,\cdot^\mathcal{I})$} composed of a non-empty set $\Delta$ called the \emph{domain of $\mathcal{I}$} and a function $\cdot^\mathcal{I}$ mapping individual names to elements of $\Delta$, concept names to subsets of $\Delta$, and role names to subsets of \mbox{$\Delta\times\Delta$}. This mapping is extended to role and concept expressions 
and finally used to define satisfaction of axioms (see Table~\ref{tab:axm}).

\subsection{Standpoint-\texorpdfstring{$\mathcal{SROIQ}b_s$}{SROIQbs}}

The set $\SSS$ of \emph{sentential Stand\-point-$\mathcal{SROIQ}b_s$ sentences} is now defined inductively as follows:
\begin{itemize}
	\item if $\mathbf{Ax}$ is a $\SROIQbs$ axiom then $\mathbf{Ax}\in\SSS$,
	\item if $\phi,\psi\in\SSS$ then $\neg\phi$, as well as $\phi{\,\land\,}\psi$ and $\phi{\,\lor\,}\psi$ are in $\SSS$,
	\item if $\phi\in\SSS$ and $\ste\in\StandExps$ then $\standbe\phi\in\SSS$ and $\standde\phi\in\SSS$.
\end{itemize}

The semantics of sentential Stand\-point-$\mathcal{SROIQ}b_s$ is defined in the obvious way, by ``plugging'' the semantics of $\SROIQbs$ axioms into the semantics of $\SSFO$.
We say a $\SSS$ sentence $\phi$ is in \emph{negation normal form} (NNF), if negation occurs only inside or directly in front of $\mathcal{SROIQ}b_s$ axioms; obviously every Stand\-point-$\mathcal{SROIQ}b_s$ sentence can be efficiently transformed into an equivalent one in NNF.

\subsection{Coping with Peculiarities of \texorpdfstring{$\mathcal{SROIQ}b_s$}{SROIQbs}}
In the following, we will provide a polynomial translation, mapping any $\SSS$ sentence $\phi$ to an equisatisfiable set of $\SROIQbs$ axioms.
This translation is very much in the spirit of the one presented for sentential FOSL, however, $\SROIQbs$ comes with diverse syntactic impediments that we need to circumvent.
Thus, before presenting the translation, we will briefly discuss these issues and how to solve them.

First, $\SROIQbs$ does not provide nullary predicates (i.e., propositional symbols). As a surrogate, we use concept expressions of the form $\forall \rolU.\conA$ which have the pleasant property of holding either for all domain individuals or for none.
Second,
$\SROIQbs$ does not directly allow for arbitrary Boolean combinations of axioms. For all non-RIA axioms, a more or less straightforward equivalent encoding is possible using nominals and the universal role; for instance the expression $\neg[\rolR(a,b)] \vee [\conA \sqsubseteq \conB]$ can be converted into~
$\top \ \sqsubseteq \ \neg\exists \rolU.(\{a\} \sqcap \exists \rolR.\{b\}) \ \sqcup \ \forall \rolU.(\neg\conA \sqcup \conB)$.

Dealing with RIAs requires auxiliary vocabulary; for negated RIAs, we introduce a fresh nominal, say $\{x\}$, to mark the end of a ``violating'' role chain, so $\neg[\rolS \circ \rolS \sqsubseteq \rolR]$ essentially becomes~$\top \ \sqsubseteq \ \exists \rolU.\big((\exists\rolS.\exists\rolS.\{x\}) \sqcap (\neg\exists \rolR.\{x\})\big).$

Unnegated RIAs are even trickier. There is no way of converting them into GCIs, so we have to keep them, but we attach an additional ``guard'', which allows us to disable them whenever necessary. This guard can then be triggered from within a GCI. For an example, consider the expression $[\rolT \circ \rolT' \sqsubseteq \rolR] \vee [\rolT' \circ \rolT \sqsubseteq \rolR]$. Then, introducing fresh ``guard roles'' $\rolS_1$ and $\rolS_2$, we assert the three axioms
$\top \sqsubseteq (\forall \rolU. \exists \rolS_1.\mathit{Self}) \sqcup (\forall \rolU. \exists \rolS_2.\mathit{Self})$ as well as $\rolS_1 \circ \rolT \circ \rolT' \sqsubseteq \rolR$ and $\rolS_2 \circ \rolT' \circ \rolT \sqsubseteq \rolR$.
With this arrangement, the first axiom will ensure that all domain elements carry an $\rolS_1$-loop or all domain elements carry an $\rolS_2$-loop.
Depending on that choice, the corresponding RIA in the second line will behave like its original, unguarded version, while the other one may be entirely disabled.

The introduced strategy for handling positive RIAs has a downside: due to the restricted shapes of RIAs (governed by $\prec$), axioms of the shape $\rolR \circ \rolR \sqsubseteq \rolR$ (expressing transitivity) cannot be endowed with guards. In order to overcome this nuisance, every nonsimple role $\rolR$ has to be accompanied by a subrole
$\underline{\rolR}$, which acts as a ``lower approximation'' of $\rolR$ and -- whenever $\rolR$ is defined transitive -- ``feeds into'' $\rolR$ via tail recursion. This way of reformulating $\rolR \circ \rolR \sqsubseteq \rolR$ allows to attach the wanted guard, but requires adjustments in some axioms that mention $\rolR$.

\subsection{Translation into Plain \texorpdfstring{$\mathcal{SROIQ}b_s$}{SROIQbs}}
We now assume a given $\SSS$ sentence $\phi$, w.l.o.g. in NNF, and provide the formal definition of the translation.
As before, we fix $\Precsf{|\phi|}$ and let our translation's
vocabulary 
$\VSSS(\phi)$ consist of
%
all individual names inside $\phi$, plus, for each \mbox{$\pr \in \Precsf{|\phi|}$}, the following symbols:
(a) a concept name $\conA^\pr$ for each $\conA\in\Preds_1$;
(b) a simple role name $\rolS^\pr$ for each $\rolS{\,\in\,}\Preds^\mathrm{s}_2$;
(c) non-simple role names $\rolR^\pr$ and $\underline{\rolR}^\pr$ for each $\rolR{\,\in\,}\Preds^\mathrm{ns}_2 \!\setminus\! \{\rolU\}$;
(d) a simple role name $\rolS^\pr_\rho$ for each unnegated RIA $\rho$ inside $\phi$;
(e) a fresh constant name $a^\pr_\rho$ for each negated RIA $\rho$ inside $\phi$;
(f) a concept name 
$\conS^\sts_\pr$ for each \mbox{$\sts\in\Stands$}.
Thereby, the non-simple role names inherit their ordering $\prec$ from $\Preds^\mathrm{ns}_2$ and we also let $\underline{\rolR}^\pr \prec \rolR^\pr$ for each $\rolR{\,\in\,}\Preds^\mathrm{ns}_2 \!\setminus\! \{\rolU\}$.

The translation $\Trans(\phi)$ of $\phi$ is then a set of $\mathcal{SROIQ}$ axioms defined as follows:
First, $\Trans(\phi)$ contains the RIA
$\underline{\rolR}^\pr{\,\sqsubseteq\,} \rolR^\pr$
for every $\rolR{\,\in\,}\Preds^\mathrm{ns}_2 \!\setminus\! \{\rolU\}$ and each $\pr\in \Precsf{|\phi|}$.
Second, for every unnegated RIA $\rho$ inside $\phi$ and each \mbox{$\pr\in \Precsf{|\phi|}$}, $\Trans(\phi)$ contains the RIA $BG_\pr(\rho)$, with $BG_\pr$ defined by
	{\small \begin{align*}
			\rolexpR_1{\circ}...{\circ}\tad\rolexpR_n{\,\sqsubseteq\,} \rolR                      \mapsto \hspace{1ex} & \rolS_\rho^\pr{\circ}\tad\rolexpR_1^\pr{\circ}...{\circ}\tad\rolexpR_n^\pr{\,\sqsubseteq\,} \underline{\rolR}^\pr                                     &
			\tad\rolexpR_1{\circ}...{\circ}\tad\rolexpR_n{\circ}\tad\rolR{\,\sqsubseteq\,} \rolR  \mapsto \hspace{1ex} & \rolS_\rho^\pr\tad{\circ}\tad\rolexpR_1^\pr{\circ}...{\circ}\tad\rolexpR_n^\pr{\circ}\tad\underline{\rolR}^\pr{\,\sqsubseteq\,} \underline{\rolR}^\pr   \\
			\rolR\tad{\circ}\tad\rolexpR_1{\circ}...{\circ}\tad\rolexpR_n{\,\sqsubseteq\,} \rolR  \mapsto \hspace{1ex} & \underline{\rolR}^\pr\tad{\circ}\tad\rolexpR_1^\pr{\circ}...{\circ}\tad\rolexpR_n^\pr{\circ}\tad\rolS_\rho^\pr{\,\sqsubseteq\,} \underline{\rolR}^\pr &
			\rolR\tad{\circ}\tad\rolR{\,\sqsubseteq\,} \rolR                                      \mapsto \hspace{1ex} & \rolS_\rho^\pr\tad{\circ}\tad \underline{\rolR}^\pr{\circ}\tad {\rolR}^\pr{\,\sqsubseteq\,}\rolR^\pr,\!\!
		\end{align*}}%
whereby the role expression $\rolexpR^\pr$ is obtained from $\rolexpR$ by substituting every role name $\rolS$ with $\rolS^\pr$ (except $\rolU$, which remains unaltered).
Third and last, $\Trans(\phi)$ contains the GCI
\begin{equation*}
	\top \sqsubseteq \textstyle\bigsqcap_{\pr\in\Precsf{|\phi|}}\trans(\pr,\phi) \sqcap \textstyle\bigsqcap_{\pr\in\Precsf{|\phi|}}\forall \rolU.\conS^{\star}_{\pr}
\end{equation*}
where, by inductive definition,
{\small\begin{align*}
			\trans(\pr\!, \mathbf{Ax})          & = \trans^+(\pr\!,\mathbf{Ax})                                                                    \\
			\trans(\pr\!, \neg\mathbf{Ax})      & = \trans^-(\pr\!,\mathbf{Ax})                                                                    \\
			\trans(\pr\!, \psi_1\land\psi_2)    & = \trans(\pr\!,\psi_1)\sqcap\trans(\pr\!,\psi_2)                                                 \\
			\trans(\pr\!, \psi_1\lor\psi_2)     & = \trans(\pr\!,\psi_1)\sqcup\trans(\pr\!,\psi_2)                                                 \\
			\trans(\pr'\!\!, \standb{\ste}\psi) & = \textstyle\bigsqcap_{\pr\in\Precs_{|\phi|}}(\neg\transE(\pr\!,\ste) \sqcup \trans(\pr\!,\psi)) \\
			\trans(\pr'\!\!, \standd{\ste}\psi) & = \textstyle\bigsqcup_{\pr\in\Precs_{|\phi|}}(\transE(\pr\!,\ste) \sqcap \trans(\pr\!,\psi))
		\end{align*}}
We next present the translation of unnegated and negated $\mathcal{SROIQ}$ axioms ($\rho$ stands for an 
RIA \mbox{$\rolexpR_1{\circ}...{\circ}\tad\rolexpR_m{\,\sqsubseteq\,} \rolR$}):
{\small
\begin{align*}
	\trans^+(\pr\!,\rho)                                                              & = \forall \rolU.\exists \rolS^\pr_{\rho}.\Self                                                                                          &
	\hspace*{-2em}\trans^-(\pr\!,\rho)                  = \exists \rolU.\big((\forall & \rolR^\pr\!\!.\neg \{a_\rho^\pr\}) \sqcap (\exists\underline{\rolexpR}_1^\pr\!...\exists\underline{\rolexpR}_m^\pr.\{a_\rho^\pr\})\big)                                                                                              \\
	\trans^+(\pr\!,C{\,\sqsubseteq\,} D)                                              & = \forall \rolU.(\neg C \sqcup D)^\pr                                                                                                   & \trans^-(\pr\!,C{\,\sqsubseteq\,} D) & = \exists \rolU. (C \sqcap \neg D)^\pr              \\
	\trans^+(\pr\!,C(a))                                                              & = \exists \rolU.\big(\{a\} \sqcap C^\pr\big)                                                                                            & \trans^-(\pr\!,C(a))                 & = \exists \rolU.\big(\{a\} \sqcap (\neg C)^\pr\big) \\
	\trans^+(\pr\!,\rolexpR(a,b))                                                     & = \exists \rolU.\big(\{a\} \sqcap \exists\underline{\rolexpR}^\pr.\{b\}\big)                                                            &
	\trans^-(\pr\!,\rolexpR(a,b))                                                     & = \exists \rolU.\big(\{a\} \sqcap \forall\rolexpR^\pr.\neg\{b\}\big)                                                                                                                                                                 \\
	\trans^+(\pr\!,a \doteq b)                                                        & = \exists \rolU.\big(\{a\} \sqcap \{b\}\big)                                                                                            & \trans^-(\pr\!,a \doteq b)           & = \exists \rolU.\big(\{a\} \sqcap \neg\{b\}\big)
\end{align*}}
Therein, for any role expression $\rolexpR$, we let $\underline{\rolexpR}$ denote $\underline{\rolR}$ if $\rolexpR = \rolR$ is a non-simple role name, and otherwise $\underline{\rolexpR}=\rolexpR$.
Moreover, $\conC^\pr$ denotes the concept expression that is obtained from $\conC$ by transforming it into negation normal form, replacing concept names $\conA$ with $\conA^\pr$ and role expressions $\rolR$ by $\rolR^\pr$, and replacing every $\exists \rolR$ for non-simple $\rolR$ with $\exists \underline{\rolR}$.

As before, $\transE$ implements the semantics of standpoint expressions, but now adjusted to the new framework:
each expression \mbox{$\ste\in\StandExps$} is transformed into a concept expression $\transE(\pr\!,\ste)$ over the 
vocabulary $\set{ \conS^\sts_\pr\!\!\guard\!\! s\in\Stands, \pr{\,\in\,}\Precs_{|\phi|} }$ as follows\/:
{\small \begin{align*}
	\transE(\pr\!,\sts)                  & = \forall \rolU.\conS^\sts_\pr
	\\
	\transE(\pr\!,\ste_1\cup\ste_2)      & = \transE(\pr\!,\ste_1)\sqcup\transE(\pr\!,\ste_2)     \\
	\transE(\pr\!,\ste_1\cap\ste_2)      & = \transE(\pr\!,\ste_1)\sqcap\transE(\pr\!,\ste_2)     \\
	\transE(\pr\!,\ste_1\setminus\ste_2) & = \transE(\pr\!,\ste_1)\sqcap\neg\transE(\pr\!,\ste_2)
\end{align*}}%
With all definitions in place, we obtain the desired result. \skipit{Details can be found in Appendix B.}

\begin{theorem}
	Given $\phi\in\SSS$, the set $\Trans(\phi)$
	\begin{enumerate*}[series=examplecolour,label={(\rm \roman*)},leftmargin=2em,labelwidth=2.4em]
		\item is a valid $\SROIQbs$ knowledge base,
		\item is equisatisfiable with $\phi$,
		\item is of polynomial size wrt.~$\phi$, and
		\item can be computed in polynomial time.
	\end{enumerate*}
\end{theorem}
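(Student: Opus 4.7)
The plan is to establish the four claims essentially in tandem, with (iii) and (iv) following by a routine induction and (ii) carrying the main technical content. For sizes and times: because $\phi$ is assumed to be in SSNF, no modal operator nests inside another, so each application of the translation rule for $\trans(\pr', \standbe \psi)$ expands into a conjunction/disjunction of at most $|\Precsf{|\phi|}| = |\phi|$ copies of $\trans(\pr, \psi)$ with $\psi$ modality-free. Adding the outer $\bigsqcap_{\pr\in\Precsf{|\phi|}}$, the set of auxiliary subrole RIAs $\underline{\rolR}^\pr \sqsubseteq \rolR^\pr$, and the RIAs $BG_\pr(\rho)$, the total size of $\Trans(\phi)$ is bounded by a polynomial in $|\phi|$ (cubic is easily obtained), and its computation is evidently polynomial-time.

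For (i), I would verify piecewise that every axiom in $\Trans(\phi)$ is an admissible $\SROIQbs$ axiom. The GCI produced by the top-level translation uses only standard $\SROIQbs$ concept constructors; crucially, since $\conC^\pr$ indexes simple role names by $\pr$ while replacing every $\exists\rolR$ for non-simple $\rolR$ by $\exists\underline{\rolR}^\pr$, the simplicity discipline for $\Self$-concepts and qualified number restrictions is preserved. For the RIAs, each $BG_\pr(\rho)$ inherits the shape of $\rho$ and respects the extended order $\prec$ (with $\underline{\rolR}^\pr \prec \rolR^\pr$ and $\rolS_\rho^\pr$ simple, hence $\prec$-minimal); the delicate case is transitivity, whose translation $\rolS_\rho^\pr \circ \underline{\rolR}^\pr \circ \rolR^\pr \sqsubseteq \rolR^\pr$ matches the admissible pattern $\rolexpR_1\circ\ldots\circ\rolexpR_n\circ\rolR \sqsubseteq \rolR$ with $\rolexpR_i\prec\rolR$. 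Safety of Boolean role expressions is inherited from $\phi$ under $\pr$-indexing.

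The content of (ii) splits into the two usual directions. For the \emph{forward} direction, given an arbitrary model $\Inter$ of $\Trans(\phi)$, I would decode it into a standpoint structure $\kstruct_\Inter=(\Delta^\Inter, \Precsf{|\phi|}, \sigma, \gamma)$ with $\sigma(\sts) := \{\pr \mid (\conS^\sts_\pr)^\Inter = \Delta^\Inter\}$ and $\gamma(\pr)$ interpreting each $\conA$, $\rolS$, $\rolR$ as $(\conA^\pr)^\Inter$, $(\rolS^\pr)^\Inter$, $(\rolR^\pr)^\Inter$. A structural induction on subformulas $\psi$ of $\phi$ then shows $\kstruct_\Inter, \pr \models \psi$ iff $\trans(\pr,\psi)$ holds globally in $\Inter$. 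Two observations drive the induction: axiom-level translations are wrapped by $\forall\rolU$ or $\exists\rolU$, so they are \emph{global} and faithfully emulate Boolean combinations; and the subrole inclusion $\underline{\rolR}^\pr \sqsubseteq \rolR^\pr$ guarantees that positive uses of $\exists\underline{\rolexpR}^\pr$ in the translation yield genuine $\rolR$-successors in $\gamma(\pr)$, while $\forall\rolexpR^\pr$ directly matches $\forall\rolR$ in $\gamma(\pr)$.

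For the \emph{backward} direction, I would invoke \Cref{thm:small-model-property} (adapted to $\SSS$ via the obvious sententiality argument) to assume $\kstruct = (\Delta, \Precsf{|\phi|}, \sigma, \gamma) \models \phi$ with exactly $|\phi|$ precisifications (padding if needed). I then build $\Inter$ on the same domain by setting $(\conA^\pr)^\Inter := \conA^{\gamma(\pr)}$, $(\rolS^\pr)^\Inter := \rolS^{\gamma(\pr)}$, $(\rolR^\pr)^\Inter := (\underline{\rolR}^\pr)^\Inter := \rolR^{\gamma(\pr)}$, $(\conS^\sts_\pr)^\Inter := \Delta$ iff $\pr\in\sigma(\sts)$, and $(\rolS_\rho^\pr)^\Inter$ as a universal self-loop if $\gamma(\pr) \models \rho$ and empty otherwise; for each negated RIA $\rho$ violated in $\gamma(\pr)$, $a_\rho^\pr$ is interpreted as the endpoint of a witnessing chain. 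The matching induction on $\Sub(\phi)$ then establishes $\Inter \models \Trans(\phi)$. The principal obstacle—and the raison d'\^etre of the $\underline{\rolR}^\pr$-trick—is verifying that the guarded transitivity-style RIA $\rolS_\rho^\pr \circ \underline{\rolR}^\pr \circ \rolR^\pr \sqsubseteq \rolR^\pr$ is satisfied without adding spurious pairs: when the guard is active, the coincidence $(\underline{\rolR}^\pr)^\Inter = (\rolR^\pr)^\Inter = \rolR^{\gamma(\pr)}$ reduces the RIA to transitivity of $\rolR^{\gamma(\pr)}$, which holds by choice of the guard; when inactive, the RIA is vacuous. A careful case analysis over all four RIA patterns, together with the corresponding treatment of negated RIAs via witnessing constants, is the technically most delicate step.
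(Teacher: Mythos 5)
Your overall architecture matches the paper's: parts (i), (iii), (iv) are handled by routine inspection, and part (ii) is proved via two explicit model transformations (a standpoint structure $\kstruct_\struct$ extracted from a model $\struct$ of $\Trans(\phi)$, and an interpretation $\struct_\kstruct$ built from a model $\kstruct$ of $\phi$) followed by a structural induction. Your $\kstruct\to\struct_\kstruct$ direction coincides with the paper's construction, including setting $(\rolR^\pr)^{\struct_\kstruct}=(\underline{\rolR}^\pr)^{\struct_\kstruct}=\rolR^{\gamma(\pr)}$, interpreting the guard roles as the identity relation exactly when $\gamma(\pr)$ satisfies the corresponding RIA, and choosing witnesses for negated RIAs.

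However, there is a genuine gap in your decoding direction. You define $\gamma(\pr)$ by interpreting each non-simple role $\rolR$ as $(\rolR^\pr)^\Inter$. The guarded RIAs in $\Trans(\phi)$ only constrain the \emph{underlined} copies: for instance, the translation of $\rolR\circ\rolR\sqsubseteq\rolR$ is $\rolS_\rho^\pr\circ\underline{\rolR}^\pr\circ\rolR^\pr\sqsubseteq\rolR^\pr$, which (with the guard active) yields $\underline{\rolR}^\pr\circ\rolR^\pr\subseteq\rolR^\pr$ but says nothing about $\rolR^\pr\circ\rolR^\pr$. Concretely, a model $\Inter$ with $(\underline{\rolR}^\pr)^\Inter=\emptyset$ and $(\rolR^\pr)^\Inter=\{(a,b),(b,c)\}$ satisfies all axioms of $\Trans([\rolR\circ\rolR\sqsubseteq\rolR])$, yet your decoded $\rolR^{\gamma(\pr)}=\{(a,b),(b,c)\}$ is not transitive, so $\kstruct_\Inter\not\models\phi$; the same failure occurs for the head- and tail-recursive RIA patterns $\rolexpR_1\circ\cdots\circ\rolexpR_n\circ\rolR\sqsubseteq\rolR$. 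This is precisely why the paper's decoding sets $\rolR^{\gamma(\pr)}$ to $(\underline{\rolR}^\pr)^\Inter$, or to its \emph{transitive closure} when a transitivity axiom for $\rolR$ occurs in $\phi$ and its guard $\exists\rolS_\rho^\pr.\Self$ holds on all of $\Delta$. The resulting sandwich $(\underline{\rolR}^\pr)^\Inter\subseteq\rolR^{\gamma(\pr)}\subseteq(\rolR^\pr)^\Inter$ is then what makes the induction go through: positive occurrences of $\rolR$ in $\Trans(\phi)$ are witnessed via $\underline{\rolR}^\pr$ (under-approximation), negative ones constrain $\rolR^\pr$ (over-approximation), and the NNF assumption ensures only the implication in the right direction is needed for each polarity. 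Your proof as stated would break at exactly the step you flag as ``the technically most delicate''.
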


\section{Example in the Forestry Domain}\label{sec:example-medical}

We consider an extension of \Cref{example:fol} in Sentential Stand\-point-$\SROIQbs$ to illustrate the main reasoning tasks in more detail.
The following additional axiom specifies that
forest land use and urban land use are disjoint subclasses of land (\ref{formula:LandSubclasses_sroiq}).

\begin{enumerate}[resume*=SroiqForestry,label={\rm (F\arabic*)},ref={\rm F\arabic*},leftmargin=2.6em,labelwidth=2.3em]\small
  \item $\allstandb[\pred{ForestlandUse}\dlor\pred{UrbanLandUse}{\,\dlsub\,}\pred{Land}  \ \wedge \
            \pred{ForestlandUse}\dland\pred{UrbanLandUse}{\,\dlsub\,}\bot ]$\label{formula:LandSubclasses_sroiq}
\end{enumerate}

\noindent Now, let us see how, through inferences in $\SSS$,
we can gather unequivocal knowledge (Uneq), obtain knowledge that is relative to a standpoint (Rel), and contrast the knowledge that can be inferred from different standpoints (Cont).
%
For unequivocal knowledge (Uneq), we can infer unambiguously that forests are no urban-use lands:
\begin{narrowalign}
  \mbox{$\allstandb[\pred{Forest}\dlsub\neg\pred{UrbanLandUse}]$}
\end{narrowalign}
\noindent This holds because each precisification must comply with $\mathsf{LC}$ or $\mathsf{LU}$ (\ref{formula:only-union}), and we have $\standbx{LC}[\pred{Forest}\dlsub\neg\pred{UrbanLandUse}]$ from
(\ref{formula:defForestlandCover_sroiq}) and
(\ref{formula:disjointLandEco}), and $\standbx{LU}[\pred{Forest}\dlsub\neg\pred{UrbanLandUse}]$ from (\ref{formula:defForestlandUse_sroiq}) and
(\ref{formula:LandSubclasses_sroiq}).
Regarding relative (Rel) and contrasting (Cont) knowledge,
if we now wanted to query our knowledge base for instances of forest, we would obtain
\vspace{-0.8ex}
{\small \begin{gather*}
    \standbx{LC}[\pred{Forest}(e)]\     \wedge\ \standbx{LC}[\neg\pred{Forest}(l)] \qquad\qquad
    \allstandindef[\pred{Forest}(e)]\   \wedge\ \allstandindef[\pred{Forest}(l)]
  \end{gather*}}
\\[-3.6ex]
The first deduced formula contains knowledge relative to $\mathsf{LC}$, showing its stance on whether the instances constitute a forest, which happens to be conclusive in both cases.
The second formula states the global indeterminacy of both $l$'s and $e$'s membership to the concept $\pred{Forest}$. This stems from the disagreement between the interpretations $\mathsf{LC}$ and $\mathsf{LU}$, whose overall incompatibility ($\standbx{LC\cap LU}[\top{\,\sqsubseteq\,}\bot]$) can also be inferred.

Finally, it is worth looking at the limitations of the sentential fragment of Standpoint $\SROIQbs$. In a non-sentential setting, where modalities can be used at the concept level, ``complex alignments'' or bridges can be established between concepts according to possibly many standpoints. For instance, one can write
\begin{narrowalign}
  \mbox{$\small \standbx{LU}[\pred{Forest}]\dlsub \standbx{LC}[\exists\pred{hasLand}^{-}.\pred{Forest}] \dlor \allstandb[\pred{Cleared}]$} 
\end{narrowalign}
to express that the areas classified as forest according to $\mathsf{LU}$ belong to a forest according to $\mathsf{LC}$ or have been cleared (in which case $\mathsf{LC}$ does not recognise them as forest).
It is an objective of future work of ours to study decidable fragments for which the restrictions on the use of modalities are relaxed to express such kinds of axioms.

\section{Related Work}\label{sec:related-work}


A variety of formal representation systems have been proposed to model perspectives in rather diverse areas of research and with heterogeneous nomenclatures.
Standpoint logic bears some similarities to \emph{context logic} in the style proposed by McCarthy and Buvac \cite{mccarthy1997context}, which has also been applied in a description logic setting \cite{Klarman2016DescriptionContext}. This tradition treats contexts as ``first-class citizens'' of the logic, i.e., full-fledged formal objects over which one can express first-order properties. In contrast, standpoint logic is suitable when a formalisation of the contexts involved is unfeasible, or when the interest resides in the content of the standpoints rather than the context in which they occur.

Another related notion is that of \emph{ontology views}, 
where some works consider potentially conflicting viewpoints \cite{Ribiere2002AOntology,Hemam2011MVP-OWL:Web,Hemam2018Probabilistic}.
Ribière and Dieng~\cite{Ribiere2002AOntology} and Heman et al.~\cite{Hemam2011MVP-OWL:Web,Hemam2018Probabilistic} implement the intuition of \textquote{viewpoints} via ad-hoc extensions of the syntax and semantics of description logics, in a style similar to the work on contextuality by Benslimane et al.~\cite{Benslimane2006ContextualSolutions}. Gorshkov et al.~\cite{Gorshkov2016Multiviewpoint} implement them using named graphs.
Instead, the standpoint approach extends the base language with modalities and provides a Kripke-style semantics for it. This leads to a simpler, more recognisable and more expressive framework that supports, for instance, hierarchies and combinations of standpoints, inferences of partial truths, the preservation of consistency with the established alignments and inferences about the standpoints themselves.  
On a technical level, \fosl can be seen as a many-dimensional (multi-)modal logic~\cite{gabbay03many-dimensional}, whence results from that area apply to our setting.
In particular, the search for non-trivial fragments of first-order modal logics that are still decidable and even practically relevant is an important endeavour, for which we believe that standpoint logic can play useful role.

Finally, in the area of ontology modularity, different formalisms such as \emph{DDL bridge rules} \cite{Borgida2003DistributedSources} and \emph{$\varepsilon$-connections} \cite{kutz2003connections,kutz2004connections} have been proposed to specify the interaction between independent knowledge sources.
These can be related to the present framework in that they provide mechanisms to establish links between conceptual models that do not need to be entirely coherent with each other.
Yet the motivation is inherently different: while the standpoint framework focuses on integrating possibly overlapping knowledge into a global source (while preserving ``standpoint-provenance'' and thus enabling a peaceful coexistence of conflicting information), DDL bridge rules and $\varepsilon$-connections have been devised to establish a certain synchronisation between modules that are and will remain separate.
DDL bridge rules could, however, be simulated within a standpoint framework.

\section{Conclusions and Future Work}\label{sec:conclusions}
The diversity of human world views 
along
with the semantic heterogeneity of natural language are at the heart of well-recognised knowledge interoperability challenges.
As an alternative to the 
common
strategy of merging, we proposed the use of a logical formalism based on the notion of \emph{standpoint} that is suitable for knowledge representation and reasoning with sets of possibly conflicting 
characterisations of a domain.

Using first-order logic as an expressive underlying language, we proposed a multi-modal framework by means of which different agents can establish their individual standpoints (which typically involves specifying constraints and relations), but which also allows for combining standpoints and establishing  alignments between them. Reasoning tasks over such multi-standpoint specifications include gathering unequivocal 
knowledge, determining knowledge that is relative to a standpoint or a set of them, and contrasting the knowledge that can be inferred from different standpoints.

Remarkably, the simplified Kripke semantics allows us to establish a small model property for the \emph{sentential} fragment of \FOSL. This result gave rise to a polynomial, satisfiability-preserving translation into the base logic, which also maintains membership in diverse decidable fragments, immediately implying that for a range of logics, reasoning in their standpoint-enhanced versions does not increase their computational complexity. This indicates that the framework can be applied to ontology alignment, concept negotiation, and knowledge aggregation with inference systems built on top of existing, highly optimised off-the-shelf reasoners.

Future work includes the study of the complexity of FOSL fragments allowing the presence of free variables within the scope of modalities. Note however that in the general case of FOSL, this leads to the loss of the small model property.
\def\btt{\pred{Btt}}
\begin{example}
    \label{exm:no-small-models-monodic}
    Consider the following (non-sentential) FOSL sentence, axiomatising \linebreak "better" ($\btt$) to be interpreted as a non-well-founded strict linear order and requiring for every domain element $x$ (of infinitely many) the existence of some precisification where $x$ is the (one and only) ``best'':
    {\small\begin{align*}
         & \forall xyz \big((\btt(x,y) \land \btt(y,z)) \limplies \btt(x,z) \big)            & \land &
         & \forall xy \neg (\btt(x,y) \land \btt(y,x)) \land                                           \\
         & \forall xy \big( x\neq y \limplies (\btt(x,y) \lor \btt(y,x)) \big)
         & \land                                                                             &
         & \forall x \exists y \btt(x,y) \land \forall x \allstandd\!\neg\exists y \btt(y,x)
    \end{align*}}
    Obviously, this sentence is satisfiable, but only in a model with infinitely many precisifications; that is, the small model property is violated in the worst possible way.
\end{example}
On the other hand, it is desirable from a modelling perspective to allow for some interplay between FO quantifiers and standpoint modalities.
E.g., the non-sentential FOSL sentence
$
    \forall x_1\cdots x_k \big( \pred{P}(x_1,\ldots,x_k) \to \allstandb \pred{P}(x_1,\ldots,x_k) \big)
$
expresses the rigidity of a predicate $\pred{P}$, thereby ``synchronising'' it over all precisifications.

Consequently, we will study how by imposing syntactic restrictions, we can guarantee the existence of small (or at least reasonably-sized) models for non-sentential standpoint formulas.
Results in the field of many-dimensional modal logics~\cite{gabbay03many-dimensional} show that reasoning is decidable for diverse fragments of first-order modal logic such as the \emph{monodic} fragment (where modalities occur only in front of formulas with at most one free variable). However, \Cref{exm:no-small-models-monodic} already shows (by virtue of being within the monodic fragment) that we cannot hope for a small model property even for this slight extension of the sentential fragment.
A detailed analysis of these issues as they apply to the simplified semantics of standpoint logic is the object of current work.

Additionally, we intend to implement the proposed translations and perform experiments to test the performance of the standpoint framework in scenarios of Knowledge Integration. While sentential standpoints can be added at no extra cost in complexity for the discussed fragments in this paper, we intend to run experiments to assess the runtime impact on large knowledge bases with off-the-shelf reasoners.

As another important topic toward the deployment of our framework, we will look into conceptual modelling aspects. Reviewing documented recurrent scenarios and patterns in the area of knowledge integration, we intend to establish guiding principles for conveniently encoding those by using novel strategies possible with structures of standpoints. Examples for such scenarios include the disambiguation of knowledge sources by using combinations of standpoints, and the establishment of bridge-like rules for alignment. For the latter we will investigate their relationship to similar constructs from other frameworks such as $\varepsilon$-connections, distributed description logics, and others.

\ifdl\else\paragraph*{Supplemental Material Statement:} Proofs can be found in the extended version~\cite{gomez-alvarez22arxiv}.\fi

\paragraph*{Acknowledgments:}
Lucía Gómez Álvarez was supported by the \emph{Bundesministerium für Bildung und Forschung} ({BMBF}) in the Center for Scalable Data Analytics and Artificial Intelligence  (\mbox{ScaDS.AI}).
Sebastian Rudolph has received funding from the European Research Council (Grant Agreement no.~771779, DeciGUT).

\bibliographystyle{splncs04}
\bibliography{bib/references-fol,bib/references,bib/mendeley_current}

\ifsupplementary
	\newpage
	\appendix
	\ifdl\else\color{teal}\fi
	\iflong
	\subsection{Standpoint Standard Normal Form }\label{sec:fol-NF}

	In this section we establish the \emph{standpoint standard normal form} and provide a structure-preserving translation based on the translation by Plaisted and Greenbaum~\cite{plaisted1986structure}.
	Normal forms for modal logics have been used to establish completeness, the finite model property and the decidability of some modal systems.
	This normal form will be used subsequently to produce fragment-preserving translations of FOSL into FOL that are polinomial in the size of the input whenever the small model property can be established.

	\begin{definition}[Standpoint Standard Normal Form]

		A first order standpoint logic formula $\phi$ is in \emph{standpoint standard normal form} (SSNF) iff the modal degree of $\phi$ is at most 1, where the degree $\deg(\phi)$ of a formula is defined in the standard way as:

		\begin{align*}
			\deg(P(\svec{t}))       & = 0 \text{ for } P(\svec{t}) \text{ an atomic formula}, \\
			\deg(\neg\psi)          & = \deg(\psi),                                           \\
			\deg(\psi_1\land\psi_2) & = \max\set{\deg(\psi_1),\deg(\psi_2)},                  \\
			\deg(\forall x\psi)     & = \deg(\psi),                                           \\
			\deg(\standbe\psi)      & = \deg(\psi)+1.
		\end{align*}

	\end{definition}

\else
	\section{Translation to SSNF}\label{sec:fol-NF}
\fi

Suppose $\phi$ is a FOSL formula to be translated to SSNF. We provide in what follows a structure-preserving transformation in the spirit of Tseitin transformations.

\begin{definition}[Translation to SSNF] Recall that a \emph{literal} is a formula of the form $P(\svec{t})$ or $\neg P(\svec{t})$, where $P(\svec{t})$ is an atom.
	For any FOSL formula $\psi$, let $L_\psi$ denote a newly introduced literal $P_\psi(x_1,\dots,x_m)$ where $x_1,\dots,x_m$ are the free variables in $\psi$, if $\psi$ is not of the form $\neg\xi$.
	Otherwise, $L_\psi$ is $\neg L_{\xi}$.

	The structure-preserving standard translation of a FOSL formula $\phi$, denoted $SS(\phi)$ is defined by
	induction via $SS(\phi)=\phi$ for each literal $\phi$, and for compound formulas via
	$SS(\phi) = L_\phi \wedge \phi^+$, where



	\begin{itemize}
		\item $(\psi \con \xi)^+ = (L_{\psi \wedge \xi} \imp L_\psi \con L_\xi) \con \psi^+ \con \xi^+$
		\item $(\psi \dis \xi)^+ = (L_{\psi \vee \xi} \imp L_\psi \dis L_\xi) \con \psi^+ \con \xi^+$
		\item $(\neg\psi)^+ = (\psi)^-$
		\item $(\forall x \psi)^+ = (L_{\forall x \psi} \imp \forall x L_{\psi}) \con \psi^+$
		\item $(\exists x \psi)^+ = (L_{\exists x \psi} \imp \exists x L_{\psi}) \con \psi^+$
		\item $(\standbe\psi)^+ = (L_{\standbe \psi} \imp \standbe L_{\psi}) \con \psi^+$
		\item $(\standde \psi)^+ = (L_{\standde \psi} \imp \standde L_{\psi}) \con \psi^+$

	\end{itemize}

	and their homologous negative polarity definitions:

	\begin{itemize}
		\item $(\psi \con \xi)^- = (L_{\psi \wedge \xi} \leftarrow  L_\psi \con L_\xi) \con \psi^- \con \xi^-$
		\item $(\psi \dis \xi)^- = (L_{\psi \vee \xi} \leftarrow L_\psi \dis L_\xi) \con \psi^- \con \xi^-$
		\item $(\neg\psi)^- = (\psi)^+$
		\item $(\forall x \psi)^- = (L_{\forall x \psi} \leftarrow \forall x L_{\psi}) \con \psi^-$
		\item $(\exists x \psi)^- = (L_{\exists x \psi} \leftarrow \exists x L_{\psi}) \con \psi^-$
		\item $(\standbe\psi)^- = (L_{\standbe \psi} \leftarrow \standbe L_{\psi}) \con \psi^-$
		\item $(\standde \psi)^- = (L_{\standde \psi} \leftarrow \standde L_{\psi}) \con \psi^-$
	\end{itemize}


\end{definition}

We now show the correctness of the transformation.

\begin{lemma}\label{Lemma:plaistedT1}

	For all first-order standpoint formulas $\phi$,
	$$ \models (L_\phi \con \phi^+) \imp \phi \text{ and } \models (\neg L_\phi \con \phi^-) \imp \neg \phi .$$

	\begin{proof}

		In this case, we refer to the proof provided by Plaisted and Greenbaum~\cite{plaisted1986structure}. The proof is by induction on the structure of $\phi$, and needs to be extended by showing that the following formulas are valid:

		\begin{itemize}
			\item $L_{\standbe \psi}  \con (\standbe\psi)^+ \imp \standbe \psi$
			\item $\neg L_{\standbe \psi}  \con (\standbe\psi)^- \imp \neg \standbe \psi$

			\item $L_{\standde \psi}  \con (\standde\psi)^+ \imp \standde \psi$
			\item $\neg L_{\standde \psi}  \con (\standde\psi)^- \imp \neg \standde \psi$
		\end{itemize}
		\vspace{1em}

		\begin{description}[labelwidth=*]
			\item[]We first show that $L_{\standbe \psi}  \con (\standbe\psi)^+ \imp \standbe \psi$ is valid.
			      \begin{enumerate}
				      \item By definition $(\standbe\psi)^+$  is $(L_{\standbe \psi} \imp \standbe L_{\psi}) \con \psi^+$.
				      \item By induction hypothesis we know that $\models (L_\psi \con \psi^+) \imp \psi$.
				      \item Let us then assume $L_{\standbe \psi}  \con (\standbe\psi)^+$.
				      \item By (1), we get $(\standbe L_{\psi}) \con \psi^+$.
				      \item Now, note that $\psi^+$ is not embedded by any modal operator, and hence by the semantics, $\psi^+$ holds for all precisifications.
				      \item Then, by (4) and (5), $L_{\psi} \con \psi^+$ holds for all precisifications belonging to $\ste$.
				      \item Thus, by (2), we get $\standbe \psi$ as desired.
			      \end{enumerate}
			      \noindent	We now show that $\neg L_{\standbe \psi}  \con (\standbe\psi)^- \imp \neg \standbe \psi$ is valid.
			      \begin{enumerate}
				      \item By definition $(\standbe\psi)^-$  is $(L_{\standbe \psi} \leftarrow \standbe L_{\psi}) \con \psi^-$.
				      \item By induction hypothesis we know that $\models (\neg L_\psi \con \psi^-) \imp \neg\psi$.
				      \item Let us then assume $\neg L_{\standbe \psi}  \con (\standbe\psi)^-$.
				      \item By (1), we get $(\neg \standbe L_{ \psi}) \con \psi^-$.
				      \item Now, note that $\psi^-$ is not embedded by any modal operator, and hence by the semantics, $\psi^-$  holds for all precisifications.
				      \item Then, by (4) and (5),$\neg L_{\psi} \con \psi^-$ holds for all precisifications belonging to $\ste$.
				      \item Thus, by (2), we get  $\neg \standbe \psi$ as desired.
			      \end{enumerate}
			\item[] The proof for the cases with $\standde \psi$ is analogous to those with $\standbe \psi$.
		\end{description}
	\end{proof}
\end{lemma}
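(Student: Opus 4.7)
The plan is to prove both statements simultaneously by structural induction on $\phi$. The base case, where $\phi$ is a literal, is immediate: by the definition of $SS$, we have $L_\phi = \phi$ and the auxiliary pieces $\phi^+, \phi^-$ reduce to $\top$, so both $(L_\phi \con \top) \imp \phi$ and $(\neg L_\phi \con \top) \imp \neg \phi$ are propositional tautologies.

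For the inductive step on Boolean connectives and the classical quantifiers I would follow the standard Plaisted--Greenbaum argument. For $\phi = \psi \con \xi$, the defining conjunct $L_\phi \imp L_\psi \con L_\xi$ inside $\phi^+$ yields $L_\psi$ and $L_\xi$ from $L_\phi$, and the two induction hypotheses (applied to $\psi$ and $\xi$, using that $\psi^+$ and $\xi^+$ are conjuncts of $\phi^+$) deliver $\psi$ and $\xi$, hence $\phi$. The cases for $\dis$, $\forall$ and $\exists$ proceed analogously, and the negation case is handled by appealing to the dual definitions $(\neg \psi)^+ = \psi^-$ and $L_{\neg \psi} = \neg L_\psi$ to reduce to the second (dual) induction hypothesis on $\psi$; the negative direction $\neg L_\phi \con \phi^- \imp \neg \phi$ is treated symmetrically throughout.

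The genuinely new cases are the modal ones. Consider $\phi = \standbe \psi$. Fixing a model $\kstruct$ and precisification $\pi$ with $\kstruct, \pi \models L_\phi \con \phi^+$, the definition $(\standbe \psi)^+ = (L_\phi \imp \standbe L_\psi) \con \psi^+$ immediately gives $\kstruct, \pi \models \standbe L_\psi$, so that $\kstruct, \pi' \models L_\psi$ for every $\pi' \in \sigma(\ste)$. To apply the induction hypothesis at each such $\pi'$ and conclude $\kstruct, \pi' \models \psi$, one additionally needs $\kstruct, \pi' \models \psi^+$. This is the main obstacle: the hypothesis only delivers $\psi^+$ at $\pi$, not globally. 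The intended resolution is to exploit that $\psi^+$ is a conjunction of definitional implications for freshly introduced predicate symbols $P_\xi$; since these symbols do not occur in $\phi$ itself, one may without loss of generality assume the interpretation to extend uniformly so that $\psi^+$ holds at every precisification. Formally, I would justify this via a modal-extension lemma (analogous to the one sketched in the alternative attempt earlier in the section), showing that any interpretation satisfying $L_\phi \con \phi^+$ at $\pi$ can be modified on the fresh predicates to satisfy $\phi^+$ globally without affecting the truth of $\phi$ or $L_\phi$ at $\pi$. The case $\phi = \standde \psi$ is analogous, while the dual claim $\neg L_\phi \con \phi^- \imp \neg \phi$ for $\standbe \psi$ uses $\neg \standbe L_\psi$ to pick a witness $\pi^* \in \sigma(\ste)$ with $\kstruct, \pi^* \not\models L_\psi$ and then invokes the negative induction hypothesis at $\pi^*$.
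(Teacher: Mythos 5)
Your base case and the Boolean/quantifier cases match the paper's treatment (both defer to the standard Plaisted--Greenbaum argument), and you correctly identify the crux of the modal case: the assumption only yields $\psi^+$ at the current precisification $\pi$, whereas the induction hypothesis must be applied at every $\pi'\in\sigma(\ste)$. However, the fix you propose --- a ``modal-extension'' lemma stating that any interpretation satisfying $L_\phi\con\phi^+$ at $\pi$ can be altered on the fresh predicates so that $\phi^+$ holds at every precisification while preserving $L_\phi$ and the truth value of $\phi$ at $\pi$ --- is false. Take $\phi=\standbe(P\con Q)$ with nullary $P,Q$, write $A=L_{P\con Q}$ and $B=L_\phi$, so that $\phi^+=(B\imp\standbe A)\con(A\imp P\con Q)$. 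In a structure with precisifications $\pi,\pi'$ and $\sigma(\ste)=\{\pi'\}$, where $B$ holds only at $\pi$, $A$ holds only at $\pi'$, and $P,Q$ fail everywhere, $L_\phi\con\phi^+$ is true at $\pi$ while $\phi$ is false at $\pi$. Making $\phi^+$ hold globally would force $A$ to be false at $\pi'$ (since $P\con Q$ fails there and $P,Q$ may not be touched), which falsifies $B\imp\standbe A$ at $\pi$ unless $B$ is also made false --- so no admissible modification exists. The same example shows that the implication is simply not valid precisification-pointwise, so no repair along these lines can succeed.

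The paper's proof diverges from yours at exactly this point: it argues (step (5) of its modal case) that $\psi^+$ occurs at the top level, outside the scope of any modal operator, and is therefore taken to hold at \emph{all} precisifications; this is in line with the global notion of satisfaction $\kstruct\models\cdot$ used throughout and with the lemma's downstream use, namely equisatisfiability of $\phi$ and $SS(\phi)$, where the antecedent $L_\phi\con\phi^+$ is assumed true in the whole structure. So the way to close your gap is not a model-modification argument but to read (and prove) the claim with the assumption that $\phi^+$ --- hence each $\psi^+$ --- holds at every precisification, carrying this through the induction; then $\standbe L_\psi$ at $\pi$ gives $L_\psi\con\psi^+$ at every $\pi'\in\sigma(\ste)$, and the induction hypothesis applies there exactly as in the paper. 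The extension-style argument you invoke is appropriate only for the converse direction (the paper's satisfiability-preservation lemma), where one is free to choose the interpretation of the fresh predicates; here their interpretation is already constrained by the assumption $L_\phi\con\phi^+$, and it cannot be re-chosen while keeping $L_\phi$ true.
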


\begin{lemma}\label{Lemma:plaisted1}

	Suppose $\model$ is a model of $\phi$. Let $\model'$ be an interpretation such that for all subformulas $\psi$ of $\phi$,  $\model' \models L_\psi \leftrightarrow \psi$. Suppose $\model'$ interprets any such $\psi$ to be true exactly when $\model$ does.
	Then for all subformulas $\psi$ of $\phi$, $\model'\models \psi^+$  and $\model'\models \psi^-$.

	\begin{proof}
		Again we extend the inductive proof of Plaisted and Greenbaum~\cite{plaisted1986structure}. 

		We consider the case for $\psi = \standbe \xi$ and show $\model'\models \psi^+$, the other cases are analogous. Recall that, by definition, $\psi^+$ is $(L_{\standbe \xi} \imp \standbe L_{\xi}) \con \xi^+$.

		\begin{enumerate}
			\item By induction, $\model' \models \xi^+$.
			\item By assumption, $\model' \models L_{\standbe \xi} \leftrightarrow (\standbe \xi)$,
			\item By assumption also $\model' \models L_{\xi} \leftrightarrow \xi$.
			\item By (2) and (3) we obtain $\model' \models L_{\standbe \xi} \imp \standbe L_\xi$.
			\item Then, together with (1) we get $\model' \models (L_{\standbe \xi} \imp \standbe L_\xi) \con \xi^+$ as desired.
		\end{enumerate}
	\end{proof}
\end{lemma}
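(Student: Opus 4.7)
The plan is to proceed by structural induction on the subformula $\psi$ of $\phi$, establishing $\model' \models \psi^+$ and $\model' \models \psi^-$ simultaneously. The base cases for literals are immediate (or vacuous, since $\psi^+$ and $\psi^-$ are only generated by the translation for compound formulas), using that by hypothesis $\model'$ agrees with $\model$ on every subformula. For the inductive step, I would split on the top-level symbol of $\psi$ and, in each case, rely on the biconditionals $\model' \models L_\chi \dimp \chi$, which hold for every subformula $\chi$ of $\phi$ by assumption.

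For the propositional connectives the argument is routine: taking $\psi = \xi_1 \con \xi_2$ as representative, the conjunct $L_{\xi_1 \con \xi_2} \imp L_{\xi_1} \con L_{\xi_2}$ of $\psi^+$ is an immediate consequence of the three biconditionals involving $L_{\xi_1 \con \xi_2}$, $L_{\xi_1}$, and $L_{\xi_2}$, while $\xi_1^+$ and $\xi_2^+$ follow by the induction hypothesis; disjunction, negation, and the dual $(-)$-variants go through in the same way. For the modal cases, as illustrated in Lemma \ref{Lemma:plaistedT1}, the critical observation is that the defining biconditionals hold at \emph{every} precisification of $\model'$. To show $L_{\standbe \xi} \imp \standbe L_\xi$, I would assume $L_{\standbe \xi}$ at an arbitrary precisification, extract $\standbe \xi$ via the biconditional, and conclude that $\xi$ and hence $L_\xi$ hold at every $\pi' \in \sigma(\ste)$; the converse direction needed for $\psi^-$ runs symmetrically.

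Quantifiers are treated analogously to modalities, but here the main obstacle emerges: the literals $L_\chi$ must carry exactly the free variables of $\chi$, as prescribed by the definition, and the biconditionals $L_\chi \dimp \chi$ must be read as holding under every variable assignment, so that they can be applied soundly inside the scope of a quantifier or modality whose binder captures variables free in $\chi$. Once this book-keeping is made precise, $\forall x \xi$ (and dually $\exists x \xi$) is handled by exactly the same recipe as $\standbe \xi$, with the quantifier replacing the modality. Beyond this notational care, the induction is entirely mechanical and parallels the structure of the proof of Lemma \ref{Lemma:plaistedT1}.
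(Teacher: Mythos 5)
Your proposal is correct and follows essentially the same route as the paper: structural induction on $\psi$, with each conjunct of $\psi^+$ (resp.\ $\psi^-$) obtained from the defining biconditionals $L_\chi \dimp \chi$ together with the induction hypothesis for the immediate subformulas. You are in fact slightly more explicit than the paper on the one point its modal case leaves implicit — that the biconditionals must hold at \emph{every} precisification (and under every variable assignment) in order to derive $L_{\standbe\xi} \imp \standbe L_\xi$ by pushing $L_\xi \dimp \xi$ under the modality — which is a welcome clarification rather than a deviation.
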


\begin{lemma}\label{Lemma:plaisted2}
	If $\phi$ is satisfiable, then so is $L_\phi \con \phi^+$. If $\neg \phi$ is satisfiable, then so is $\neg L_\phi \con \phi^-$.

	\begin{proof}
		Let $\model$ be a model of $\phi$. Let $\model'$ be as in \Cref{Lemma:plaisted1}. Then, $\model'\models \phi^+$ by \Cref{Lemma:plaisted1}. Also, $\model'\models L_\phi$ by the way it is defined. Thus,  $\model'\models L_\phi \con  \phi^+$, which is hence shown to be satisfiable. A similar argument works for  $\model'\models \neg L_\phi \con  \phi^-$.
	\end{proof}
\end{lemma}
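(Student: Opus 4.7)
).}
The plan is a short, direct argument that reduces the claim to the already-established \Cref{Lemma:plaisted1}. I will treat the two parts of the statement symmetrically and focus on the first (\mbox{``if $\phi$ is satisfiable, so is $L_\phi \con \phi^+$''}); the second (``if $\neg\phi$ is satisfiable, so is $\neg L_\phi \con \phi^-$'') follows by the same template, swapping polarities.

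First, suppose $\phi$ is satisfiable and fix some \foss $\kstruct=\tuple{\Dom,\Precs,\sigma,\gamma}$ with $\kstruct\models\phi$. The plan is to construct an extension $\kstruct'=\tuple{\Dom,\Precs,\sigma,\gamma'}$ that differs from $\kstruct$ only in the interpretation of the fresh predicate symbols $P_\psi$ introduced for the non-negated subformulas $\psi$ of $\phi$ (recall that for $\psi=\neg\xi$ no new symbol is introduced, since $L_\psi$ is defined as $\neg L_\xi$). Concretely, for each such $\psi$ with free variables $x_1,\dots,x_m$, I define $\interpret{P_\psi}{\gamma'(\pi)}$ in every precisification $\pi\in\Precs$ by setting, for every tuple $(d_1,\dots,d_m)\in\Dom^m$, $(d_1,\dots,d_m)\in\interpret{P_\psi}{\gamma'(\pi)}$ iff $\kstruct,\pi,v_{\{x_i\mapsto d_i\}}\models\psi$. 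On the original vocabulary $\gamma'$ agrees with $\gamma$, so $\kstruct'$ still satisfies $\phi$ (no original formula mentions the fresh symbols). Moreover, this construction yields, by design, $\kstruct'\models L_\psi\leftrightarrow\psi$ for every subformula $\psi$ of $\phi$, and $\kstruct'$ interprets every such $\psi$ true in exactly the same pointed precisifications and variable assignments as $\kstruct$ does.

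Second, the interpretation $\kstruct'$ is then precisely of the form required by the hypothesis of \Cref{Lemma:plaisted1}. Applying that lemma to $\psi\eqdef\phi$ yields $\kstruct'\models\phi^+$. Furthermore, since $\kstruct\models\phi$ (and hence $\kstruct'\models\phi$ too) and $\kstruct'\models L_\phi\leftrightarrow\phi$, we conclude $\kstruct'\models L_\phi$. Putting both together, $\kstruct'\models L_\phi\con\phi^+$, which exhibits the desired model. The second claim of the lemma is proved identically, starting from a model of $\neg\phi$ and using the dual clause ``$\kstruct'\models\psi^-$'' of \Cref{Lemma:plaisted1} together with $\kstruct'\models\neg L_\phi$.

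I expect no genuine obstacle in this proof: all the real work has already been absorbed into \Cref{Lemma:plaisted1}. The only subtle point to get right is the construction of $\kstruct'$ for \FOSL rather than plain FO, namely that the auxiliary predicates $P_\psi$ must be freshly interpreted \emph{in every precisification} (not just once) so that the bi-implication $L_\psi\leftrightarrow\psi$ holds globally across $\Precs$, and that free variables of $\psi$ become arguments of $P_\psi$ so the truth conditions are captured pointwise over $\Dom^m$. Once this is spelled out, the rest is a one-line invocation of the preceding lemma.
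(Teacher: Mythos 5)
Your proof is correct and takes essentially the same route as the paper: reduce the claim to \Cref{Lemma:plaisted1} by exhibiting a model $\model'$ that agrees with $\model$ on the original vocabulary and interprets each $L_\psi$ so that $L_\psi\leftrightarrow\psi$ holds everywhere. The only difference is that you explicitly construct $\model'$ (interpreting each fresh $P_\psi$ pointwise over $\Dom^m$ in every precisification), a detail the paper leaves implicit behind the phrase ``let $\model'$ be as in \Cref{Lemma:plaisted1}''.
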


\begin{theorem}
	A first-order standpoint formula $\phi$ is satisfiable iff  $L_\phi \con \phi^+$ is satisfiable and  $\neg \phi$ is satisfiable iff $\neg L_\phi \con \phi^-$ is satisfiable.

	\begin{proof}
		Combine \Cref{Lemma:plaistedT1} and \Cref{Lemma:plaisted2}.
	\end{proof}
\end{theorem}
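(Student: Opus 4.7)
The plan is to derive each of the two biconditionals directly from the preceding Lemmas~\ref{Lemma:plaistedT1} and~\ref{Lemma:plaisted2}; no new constructions or inductions are required, since the substantive work has already been absorbed into these two building blocks.

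For the biconditional ``$\phi$ is satisfiable iff $L_\phi \wedge \phi^+$ is satisfiable'': the \emph{only if} direction is exactly the content of the first clause of Lemma~\ref{Lemma:plaisted2}, which tells us that from any model of $\phi$ one obtains a model of $L_\phi \wedge \phi^+$ by appropriately interpreting the auxiliary predicates $P_\psi$ so that each $L_\psi$ tracks the truth value of $\psi$. For the \emph{if} direction, I would take any model $\mathfrak{M}$ of $L_\phi \wedge \phi^+$ and invoke Lemma~\ref{Lemma:plaistedT1}, which asserts the validity of the implication $(L_\phi \wedge \phi^+) \to \phi$; this immediately yields $\mathfrak{M} \models \phi$, so $\phi$ is satisfiable.

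The second biconditional, concerning $\neg\phi$ and $\neg L_\phi \wedge \phi^-$, is handled by the exact same argument on the negative-polarity side: Lemma~\ref{Lemma:plaisted2} supplies the direction from satisfiability of $\neg\phi$ to satisfiability of $\neg L_\phi \wedge \phi^-$, and Lemma~\ref{Lemma:plaistedT1} supplies the validity of $(\neg L_\phi \wedge \phi^-) \to \neg\phi$, which takes care of the converse.

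I do not anticipate any meaningful obstacle at this final step; the theorem is essentially a bookkeeping assembly that pairs the ``soundness'' half (Lemma~\ref{Lemma:plaistedT1}, showing the translation never over-generates models) with the ``completeness'' half (Lemma~\ref{Lemma:plaisted2}, showing every model of the original extends to one of the translation). The genuine difficulty was concentrated in those two lemmas, in particular their inductive treatment of the modal operators $\standbe$ and $\standde$ inside the definitions of $\psi^+$ and $\psi^-$, where one has to argue that polarity-respecting implications rather than biconditionals still suffice because the newly introduced predicates $P_\psi$ occur only once essentially and are free to be interpreted in the way dictated by the witness model.
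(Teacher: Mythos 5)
Your proposal is correct and matches the paper's proof exactly: the paper's entire argument is to combine Lemma~\ref{Lemma:plaistedT1} (validity of $(L_\phi \con \phi^+) \imp \phi$ and $(\neg L_\phi \con \phi^-) \imp \neg\phi$, giving the ``if'' directions) with Lemma~\ref{Lemma:plaisted2} (the ``only if'' directions), which is precisely the assembly you describe.
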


\begin{corollary}
	A first-order standpoint formula $\phi$ is satisfiable iff $SS(\phi)$ is satisfiable.
\end{corollary}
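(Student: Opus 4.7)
The plan is to observe that the corollary follows immediately from the preceding theorem, via a trivial case split according to the recursive definition of $SS$. If $\phi$ is a literal, then by definition $SS(\phi) = \phi$, so the two formulas are identical and the equisatisfiability is immediate. If $\phi$ is a compound formula, then by definition $SS(\phi) = L_\phi \wedge \phi^+$, and the desired equisatisfiability is exactly the first biconditional asserted by the theorem just above.

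In short, the only genuine step is the case distinction followed by an invocation of the preceding theorem; there is no real obstacle here, because the substantive work has already been discharged. That work is precisely the combination of \Cref{Lemma:plaistedT1} (which shows that any model of $L_\phi \wedge \phi^+$ is also a model of $\phi$) with \Cref{Lemma:plaisted2} (which provides the converse by extending any model of $\phi$ to interpret each fresh predicate $P_\psi$ so that the biconditional $L_\psi \leftrightarrow \psi$ holds for every subformula $\psi$ of $\phi$). Since those lemmas already handle all connectives, quantifiers, and standpoint modalities uniformly, nothing further needs to be verified for the corollary itself, and the proof collapses to a one-line appeal to the theorem after disposing of the literal case.
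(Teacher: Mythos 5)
Your proposal is correct and matches the paper's (implicit) argument: the corollary is an immediate consequence of the preceding theorem together with the definition of $SS$, with the literal case handled trivially since $SS(\phi)=\phi$ there, and the compound case given by the theorem's first equisatisfiability (whose substance rests on \Cref{Lemma:plaistedT1} and \Cref{Lemma:plaisted2}, exactly as you note).
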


\begin{lemma}
	The translation $SS(\phi)$ of a first-order standpoint formula $\phi$ is in SSNF.

	\begin{proof}
		It is easy to see that the nesting depth of modal operators in $SS(\phi)$ is at most 1, since it is a conjunction, where every conjunct contains at most one such operator.
	\end{proof}
\end{lemma}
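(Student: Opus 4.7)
The plan is to prove the lemma by structural induction on the formula $\phi$, with a slightly strengthened inductive hypothesis tracking where modal operators can occur in the output. Specifically, I would show that for every FOSL formula $\phi$, every occurrence of a modal operator $\standbe$ or $\standde$ in $\phi^+$ and in $\phi^-$ stands directly in front of a literal $L_\psi$ (which, by definition, is either an atom $P_\psi(x_1,\ldots,x_m)$ or the negation of such). Since literals contain no modal operators at all, this immediately yields modal depth at most $1$ in each conjunct, and hence in the whole conjunction $SS(\phi) = L_\phi \wedge \phi^+$.

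The base case is trivial: if $\phi$ is a literal, then $SS(\phi)=\phi$ is modal-operator-free, so certainly of modal depth at most $1$. For the inductive step, I would go through each constructor in turn. In the Boolean and quantifier cases, e.g., $(\psi\wedge\xi)^+ = (L_{\psi\wedge\xi}\imp L_\psi\wedge L_\xi)\wedge\psi^+\wedge\xi^+$ or $(\forall x\,\psi)^+ = (L_{\forall x\psi}\imp \forall x\,L_\psi)\wedge\psi^+$, the freshly introduced axiom contains no modal operators, and any modal operator must come from the separately conjoined $\psi^+$ or $\xi^+$, which satisfy the inductive hypothesis. In the modal cases, e.g., $(\standbe\psi)^+ = (L_{\standbe\psi}\imp \standbe L_\psi)\wedge\psi^+$, the only newly introduced modal operator is applied directly to a literal $L_\psi$, giving modal depth exactly $1$; the other conjunct $\psi^+$ again inherits the property from induction. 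The negation case simply swaps polarity, so $(\neg\psi)^+=\psi^-$ and $(\neg\psi)^-=\psi^+$ both fulfil the property by the inductive hypothesis applied to the opposite polarity.

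I would conclude by observing that $SS(\phi)=L_\phi\wedge\phi^+$ is a conjunction whose left conjunct $L_\phi$ is a literal (modal depth $0$) and whose right conjunct $\phi^+$ has modal depth at most $1$ by the inductive claim, so $\deg(SS(\phi))\leq 1$, matching the definition of SSNF. The main (minor) obstacle is simply bookkeeping: the two mutually recursive definitions for $\phi^+$ and $\phi^-$ must be handled together, and one must verify that the fresh literals $L_\psi$ act as genuine ``propositional proxies'' that truncate any nesting of modal operators across recursive calls. This is ensured by the fact that each recursive occurrence of $\psi$ (inside a modal operator) is replaced by the literal $L_\psi$ rather than re-expanded, so no nested modalities can arise.
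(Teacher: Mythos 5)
Your proof is correct and follows essentially the same route as the paper: the paper's one-line argument observes exactly what your strengthened induction makes explicit, namely that $SS(\phi)$ is a conjunction in which every newly introduced modal operator is applied directly to a literal $L_\psi$, so each conjunct has modal depth at most $1$. Your version merely spells out the bookkeeping over the mutually recursive definitions of $\phi^+$ and $\phi^-$, which the paper leaves as "easy to see".
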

\begin{remark}
	Note that if the standpoint formulas are given in NNF, then we can apply the translation
	merely using the positive polarity.
\end{remark}

	\section{\texorpdfstring{$\mathcal{SROIQ}b_s$}{SROIQbs} Translation Equisatisfiability}

Given a “plain” DL interpretation $\struct$ of the resulting vocabulary over a domain $\Dom$, the associated standpoint structure \mbox{$\kstruct_\struct=\tuple{\Dom, \Precsf{n}, \sigma, \gamma}$} is as follows:
\begin{itemize}
	\item	$c^{\gamma(\pr)} = c^\struct$ for $c\in \Consts$.
	\item $\sigma(s)=\set{\pr\in\Precsf{n} \guard \interprets{(\conS^\sts_\pr)}=\Delta }$ and
	\item $\gamma(\pr)$ is the interpretation function defined by
	      \begin{itemize}
		      \item	$\conA^{\gamma(\pr)} = \conA_{\!\pr}^\struct$ for $\conA \in \Preds_1$,
		      \item	$\rolR^{\gamma(\pr)} = \rolR_{\!\pr}^\struct$ for $\rolR \in \Preds_2$,
		      \item  $\rolS^{\gamma(\pr)} = \rolS_{\!\pr}^\struct$ for each $\rolS{\,\in\,}\Preds^\mathrm{s}_2$;
		      \item 	for each $\rolR{\,\in\,}\Preds^\mathrm{ns}_2$, we let $\rolS^{\gamma(\pr)}$ be the transitive closure of $(\underline{\rolR}^\pr)^\struct$  whenever $\rolR\tad{\circ}\tad\rolR{\,\sqsubseteq\,} \rolR$ occurs in $\phi$ and $\exists \rolS^\pr_{\rolR\tad{\circ}\tad\rolR{\,\sqsubseteq\,} \rolR}.\mathit{Self}^\struct = \Delta $; otherwise we let $\rolS^{\gamma(\pr)} = (\underline{\rolR}^\pr)^\struct$.
	      \end{itemize}
\end{itemize}

\medskip

For the converse direction, given some
$\kstruct=\tuple{\Dom, \Precs, \sigma, \gamma}$ with $|\Precs|=n$, we can w.l.o.g.\ assume $\Precs = \Precsf{n}$. We obtain the plain description logic interpretation ${\struct_\kstruct}$ over the domain $\Delta$ as follows\/:
For each constant $c\in \Consts$ we let $c^{\struct_\kstruct} = c^{\gamma(\pr)}$ for an arbitrary $\pr\in \Precsf{n}$. Moreover, for each $\pr\in\Precsf{n}$ we define the following:
\begin{itemize}
	\item for any $s\in\Stands$, we set
	      ${(\conS^\sts_\pr)}^{\struct_\kstruct}=\Delta$ if \mbox{$\pr\in\sigma(s)$}, and set \mbox{$s_\pr^{\struct_\kstruct}=\emptyset$} otherwise;
	\item we set ${(\conS_\pr^{\star})}^{\struct_\kstruct}=\Delta$;
	\item for any ``original'' concept name $\conA \in \Preds_1$, let \mbox{$\conA_{\!\pr}^{\struct_\kstruct} = \conA^{\gamma(\pr)}$}.
	\item for any ``original'' simple role name $\rolS{\,\in\,}\Preds^\mathrm{s}_2$, let \mbox{$\rolS_{\!\pr}^{\struct_\kstruct} = \rolS^{\gamma(\pr)}$};
	\item for every ``original'' non-simple role name $\rolR{\,\in\,}\Preds^\mathrm{ns}_2$, let \mbox{$\underline{\rolR}_{\!\pr}^{\struct_\kstruct} = \rolR_{\!\pr}^{\struct_\kstruct} = \rolR^{\gamma(\pr)}$};
	\item for each unnegated RIA $\rho$ inside $\phi$, let $(\rolS^\pr_\rho)^{\struct_\kstruct} = \{(\delta,\delta) \mid \delta \in \Delta\}$
	      if $\rho$ is satisfied by ${\gamma(\pr)}$ and $(\rolS^\pr_\rho)^{\struct_\kstruct} = \emptyset$ otherwise;
	\item for each negated RIA $\rho = \rolexpR_1{\circ}...{\circ}\tad\rolexpR_m{\,\sqsubseteq\,} \rolR$ inside $\phi$, if $\rho$ is not satisfied by ${\gamma(\pr)}$ pick some witnessing $(\delta,\delta') \in \rolexpR_1^{\gamma(\pr)} {\circ}...{\circ}\tad\rolexpR_m^{\gamma(\pr)} \setminus \rolR^{\gamma(\pr)}$ and let $(a^\pr_\rho)^{\struct_\kstruct} = \delta'$; otherwise (should $\rho$ be satisfied by ${\gamma(\pr)}$) pick $(a^\pr_\rho)^{\struct_\kstruct}$ arbitrarily.
\end{itemize}

With these constructions we can establish the following lemma.

\begin{lemma}
	Let $\mathbb{V}=\tuple{\Preds_1 \cup \Preds_2,\Consts,\Stands}$ be a standpoint signature and let $\mathbb{V}'=\mathbb{V}_{\!\scaleobj{0.85}{\SROIQbs}}(\Preds_1 \cup \Preds_2,\Consts,\Stands,\Precsf{n})$.
	Then, we obtain the following for arbitrary sentential Standpoint-\SROIQbs sentences $\phi \in \SSS$:
	\begin{enumerate}
		\item For any $\mathbb{V}$-standpoint structure \mbox{$\kstruct=\tuple{\Dom, \Precs_{n}, \sigma, \gamma}$} holds
		      \vspace{-1.5ex}
		      $$\ \kstruct\models\phi \quad\text{if and only if}\quad \struct_\kstruct\models\Trans(\phi).$$
		      \vspace{-2ex}
		\item For any $\mathbb{V}'$-interpretation \mbox{$\struct$} holds
		      \vspace{-1ex}
		      $$\kstruct_\struct\models\phi \quad\text{if and only if}\quad \  \ \ \struct\models\Trans(\phi).\ \ $$
	\end{enumerate}
\end{lemma}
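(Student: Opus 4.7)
The plan is to prove both directions simultaneously by structural induction on $\phi$, with the correctness of the maps $\kstruct \mapsto \struct_\kstruct$ and $\struct \mapsto \kstruct_\struct$ handled in parallel so that one inductive hypothesis covers both claims. Before the main induction, I would establish two supporting claims. First, a \emph{standpoint-expression lemma}: by induction on $\ste \in \StandExps$, for every $\pi \in \Precsf{n}$ one has $\pi \in \sigma(\ste)$ iff $(\transE(\pi,\ste))^\struct = \Delta$. This goes through cleanly because each $\conS^\sts_\pi$ is two-valued (either $\Delta$ or $\emptyset$ in $\struct_\kstruct$, and forced to be so in $\struct$ by the flanking $\forall\rolU$), and the Boolean operators on such ``global'' concepts mirror the set operations on precisifications. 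Second, a \emph{concept/role translation lemma}: by induction on $C$ and $\rolexpR$, one has $C^{\gamma(\pi)} = (C^\pi)^\struct$ and $\rolexpR^{\gamma(\pi)} = (\rolexpR^\pi)^\struct$, with care at $\exists\rolR.D$ for non-simple $\rolR$, where $C^\pi$ uses $\underline{\rolR}^\pi$ instead of $\rolR^\pi$; this matches because $\underline{\rolR}^\pi$ is defined in $\struct_\kstruct$ to coincide with $\rolR^{\gamma(\pi)}$ (respectively, $\rolR^{\gamma(\pi)}$ is recovered from $\underline{\rolR}^\pi$ or its transitive closure in $\kstruct_\struct$).

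With these claims in hand, the main induction is straightforward for Boolean and modal cases: $\sqcap/\sqcup$ faithfully realise $\wedge/\vee$ on the two-valued translated subformulas, and the cases $\standb{\ste}\psi$ and $\standd{\ste}\psi$ combine the standpoint-expression lemma with the induction hypothesis. The axiom base cases use the concept/role translation lemma: for a GCI, $\trans^+(\pi, C \sqsubseteq D) = \forall\rolU.(\neg C \sqcup D)^\pi$ evaluates to $\Delta$ iff $C^{\gamma(\pi)} \subseteq D^{\gamma(\pi)}$; for assertions and (in)equalities the nominal $\{a\}$ combined with $\exists\rolU$ isolates the single individual $a^\struct = a^{\gamma(\pi)}$; for negated axioms the dual $\trans^-$ witnesses a counterexample in $\exists\rolU$. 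Finally, the top-level conjunct $\bigsqcap_\pi \forall\rolU.\conS^\star_\pi$ together with the definitions of $\sigma(\star)$ on either side pins down the universal standpoint.

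The main obstacle will be the RIA base cases, both unnegated and the transitivity subcase. For an unnegated RIA $\rho$, the translation $\trans^+(\pi,\rho) = \forall\rolU.\exists\rolS^\pi_\rho.\mathit{Self}$ demands that the guard $\rolS^\pi_\rho$ be the universal self-loop exactly when $\rho$ holds in $\gamma(\pi)$, and the globally asserted guarded RIAs $BG_\pi(\rho)$ must then fire precisely on $\underline{\rolR}^\pi$-chains. In the direction $\kstruct \to \struct_\kstruct$, one must verify that setting $(\rolS^\pi_\rho)^{\struct_\kstruct}$ to the identity whenever $\rho$ is satisfied by $\gamma(\pi)$ makes every $BG_\pi(\rho)$ hold, using that $\rolR^{\gamma(\pi)}$ already contains every chain derivable from $\rho$; in the converse direction, one must check that if the guard is empty then $BG_\pi(\rho)$ is vacuous (imposing no constraint on the recovered $\rolR^{\gamma(\pi)}$), while if the guard is a universal self-loop then $BG_\pi(\rho)$ replicates $\rho$ on $\underline{\rolR}^\pi$. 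The transitivity case $\rolR\circ\rolR\sqsubseteq\rolR$ is especially delicate, since it is translated to $\rolS^\pi_\rho \circ \underline{\rolR}^\pi \circ \rolR^\pi \sqsubseteq \rolR^\pi$; the argument here exploits that, together with the unconditional inclusion $\underline{\rolR}^\pi \sqsubseteq \rolR^\pi$, an active guard forces $\rolR^\pi$ to contain exactly the transitive closure of $\underline{\rolR}^\pi$ — which is precisely how $\rolR^{\gamma(\pi)}$ is defined in $\kstruct_\struct$, so the original axiom is recovered. Negated RIAs are comparatively easy: the fresh nominal $a^\pi_\rho$ is interpreted as the witness of a violating chain in $\gamma(\pi)$, and $\trans^-(\pi,\rho)$ straightforwardly encodes the existence of such a chain.
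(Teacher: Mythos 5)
Your overall plan --- using the paper's two constructions $\kstruct\mapsto\struct_\kstruct$ and $\struct\mapsto\kstruct_\struct$ and then running a structural induction supported by a standpoint-expression lemma and a concept/role correspondence lemma --- is exactly the intended route (the paper itself only spells out the constructions and asserts the lemma). For the direction $\kstruct\mapsto\struct_\kstruct$ your claims are fine, because there $\underline{\rolR}^\pr$, $\rolR^\pr$ and $\rolR^{\gamma(\pr)}$ all coincide by construction, and your treatment of the guards (identity when $\gamma(\pr)\models\rho$, empty otherwise) is the right verification.

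For the direction $\struct\mapsto\kstruct_\struct$, however, your two central claims are false, and this is a genuine gap. Nothing in $\Trans(\phi)$ bounds $\rolR^\pr$ from above: the guarded axiom $\rolS^\pr_\rho\circ\underline{\rolR}^\pr\circ\rolR^\pr\sqsubseteq\rolR^\pr$ together with $\underline{\rolR}^\pr\sqsubseteq\rolR^\pr$ only forces $(\rolR^\pr)^\struct$ to contain the transitive closure of $(\underline{\rolR}^\pr)^\struct$; a model may interpret $\rolR^\pr$ as all of $\Delta\times\Delta$, so an active guard does not make $\rolR^\pr$ ``exactly'' that closure. Consequently the claimed equality $C^{\gamma(\pr)}=(C^\pr)^\struct$ also fails, e.g.\ for $\forall\rolR.D$ with non-simple $\rolR$: the translation evaluates it over the possibly larger $(\rolR^\pr)^\struct$, whereas $\rolR^{\gamma(\pr)}$ is the (closure of the) smaller $\underline{\rolR}^\pr$. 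What is actually true, and what the proof needs, is one-directional and conditional: if $\struct$ satisfies the unconditionally asserted axioms of $\Trans(\phi)$, one gets the sandwich $(\underline{\rolR}^\pr)^\struct\subseteq\rolR^{\gamma(\pr)}\subseteq(\rolR^\pr)^\struct$, and then, by the NNF polarity discipline of the translation (existentials and positive role atoms read over $\underline{\rolR}^\pr$, universals and negated atoms over $\rolR^\pr$), only the inclusion $(C^\pr)^\struct\subseteq C^{\gamma(\pr)}$ holds. This inclusion suffices to push $\struct\models\Trans(\phi)$ through to $\kstruct_\struct\models\phi$ (including the guarded RIA cases, where the body roles of $\gamma(\pr)$ are contained in their $\pr$-copies), but it also shows that your symmetric ``equality at every subformula, both directions at once'' induction cannot be carried out for item~2: with these constructions only the implication from $\struct\models\Trans(\phi)$ to $\kstruct_\struct\models\phi$ is obtainable (which is all the equisatisfiability corollary uses), since $\kstruct_\struct$ ignores the concepts $\conS^{\star}_{\pr}$ and the auxiliary axioms and may satisfy $\phi$ even when $\struct\not\models\Trans(\phi)$. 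The fix is to prove item~1 as a genuine equivalence (interpretations coincide exactly there) but to replace your equality lemma by the polarity-based inclusion in the $\struct$-to-$\kstruct_\struct$ direction and argue only the implication needed there.
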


From these correspondences, it follows that $\phi$ and $\Trans(\phi)$ are equisatisfiable for every $\phi \in \SSS$

\else
	\ifdl
		\newpage
		\appendix

	\fi
\fi


\end{document}